\documentclass[a4paper,10pt]{article}
\synctex=1
\usepackage[utf8]{inputenc}
\usepackage{fullpage}
\usepackage{amsmath}
\usepackage{amsfonts}
\usepackage{amsthm}
\usepackage{amssymb}
\usepackage{xcolor}
\usepackage{graphicx}
\graphicspath{{.}}
\usepackage{algorithm}
\usepackage{xspace}
\usepackage{algpseudocode}
\usepackage{enumitem}
\setlist{leftmargin=5.5mm}

\usepackage{caption}
\usepackage{subcaption}

\setcounter{secnumdepth}{4}

\numberwithin{equation}{section}

\newtheorem{definition}{Definition}[section]
\newtheorem*{example}{Example}
\newtheorem{remark}[definition]{Remark}

\newtheorem{lemma}[definition]{Lemma}
\newtheorem{corollary}[definition]{Corollary}
\newtheorem{proposition}[definition]{Proposition}
\newtheorem{theorem}[definition]{Theorem}

\allowdisplaybreaks
\newcommand{\N}{\mathbb{N}}

\newcommand{\R}{\mathbb{R}}
\newcommand{\sgn}{\operatorname{sign}}
\newcommand{\eu}[1]{\left\lVert#1\right\rVert_2}

\newcommand{\Pb}{\mathbb{P}}
\newcommand{\E}{\mathbb{E}}

\newcommand{\del}{\delta}
\newcommand{\eps}{\varepsilon}


\newcommand{\argmin}{\operatorname{arg\,min}}

\usepackage[colorlinks, linkcolor = black, citecolor = black ]{hyperref} 

\newcommand{\bs}{\boldsymbol}
\newcommand{\cl}{\mathcal}
\newcommand{\bb}{\mathbb}

\newcommand{\ts}{\textstyle}
\newcommand{\clip}{\cl C}
\newcommand{\scp}[3][]{#1\langle #2, #3 #1\rangle}

\newcommand{\ie}{\emph{i.e.}, }
\newcommand{\eg}{\emph{e.g.}, }
\newcommand{\rv}{%
  \ifmmode
  \text{\emph{r.v.}}%
  \else%
  \emph{r.v.}\@\xspace%
  \fi%
}

\newcommand{\rvs}{%
  \ifmmode
  \text{\emph{r.v.}s}%
  \else%
  \emph{r.v.}s\@\xspace%
  \fi%
}

\newcommand{\ouralgo}{FO-SGD\xspace}
\newcommand{\bnabla}{{\nabla}}
\newcommand{\Htransform}{H}
\newcommand{\encoder}{{\mathcal{E}}}
\newcommand{\decoder}{{\mathcal{D}}}
\newcommand{\dithQ}{{\mathcal Q}}
\newcommand{\server}{{\rm s}}
\newcommand{\oraclegrad}{{g}}
\newcommand{\servergrad}{\bar{\bs g}}
\newcommand{\serverqgrad}{\bar{\bs q}}
\newcommand{\workerdgrad}{\bs g^{\rm d}}
\newcommand{\Ztransform}{{\cl Z}}
\newcommand{\Lfct}{{\cl L}}

\title{Flattened one-bit stochastic gradient descent:\\ 
  compressed distributed optimization with controlled variance}

\author{Alexander Stollenwerk and Laurent Jacques\footnote{Part of this research was supported by the Fonds de la Recherche Scientifique – FNRS under Grant T.0136.20 (Project Learn2Sense).}\\  
  \small INMA/ICTEAM, UCLouvain, Belgium\\
\normalsize Technical report: TR.2024.01}

\begin{document}

\maketitle

\begin{abstract}
We propose a novel algorithm for distributed stochastic gradient descent (SGD) with compressed gradient communication in the parameter-server framework. Our gradient compression technique, named flattened one-bit stochastic gradient descent (\ouralgo), relies on two simple algorithmic ideas: \emph{(i)} a one-bit quantization procedure leveraging the technique of dithering, and \emph{(ii)} a randomized fast Walsh-Hadamard transform to flatten the stochastic gradient before
quantization. As a result, the approximation of the true gradient in this scheme is biased, 
but it prevents commonly encountered algorithmic problems, such as exploding variance in the one-bit compression regime, deterioration of performance in the case of sparse gradients, and restrictive assumptions on the distribution of the stochastic gradients.   
In fact, we show SGD-like convergence guarantees under mild conditions. 
The compression technique can be used in both directions of worker-server communication, therefore admitting distributed optimization with full communication compression.
\end{abstract}

\section{Introduction}

In recent years, distributed optimization has become a crucial 
ingredient of a wide range of machine learning tasks \cite{TL12, RK16, SF18}.
Particularly in deep learning \cite{LB15, S15}, where ever-larger neural networks are being trained using ever-larger datasets, the process of training 
can be substantially accelerated by distributing both the data and the optimization procedure over many computing \emph{workers}.
The key feature of these types of optimization problems, which makes this parallelization possible, is that the objective function is structured as a sum over local observations made on a given computing unit. 

{In the case of a differentiable objective function,} a popular approach to solve the optimization problem {with first-order methods---such as (stochastic) gradient descent---}is to spread the gradient calculation over several workers~\cite{LA14}. The workers transmit their gradients to a parameter server, which calculates the average of the received gradients and sends it back to the workers. The workers then update the model parameters via a gradient descent step using the same (global) gradient.
While the distribution of gradient calculation across many workers generally leads to a great speed-up of training, the cost of gradient communication (from worker to server and vice versa) often constitutes a significant bottleneck in applications. {Moreover, selecting an appropriate lossy compression to limit the gradient communication bandwidth may impact---or even break---the convergence guarantees of the minimization method.}

Alongside decentralized approaches \cite{NO09, WO12, KS19}, 
one effective way to meet {these challenges} is to apply so-called \emph{gradient compression techniques}. {For these,} interesting results have been recently established, which can roughly be grouped in two directions. In the first direction, the idea is to apply randomized compression schemes in such a way that the resulting compressed gradient is an unbiased estimator of the true gradient \cite{AG17, WX17}. As a result, interesting trade-offs between communication cost and convergence guarantees have been demonstrated. An important drawback of these schemes is that 
reduction in communication cost comes at the price of drastically increased variance
bounds{, associated with the very variance of the compressed gradient estimator.} 
In the second direction, the authors of \cite{BW18, BZ19} propose to entrywise quantize the gradients using the sign function. 
This simple procedure can be applied both for worker-server and server-worker communication, thus implying distributed optimization with full one-bit communication compression. 
The proven convergence guarantees show that this scheme is even able to outperform distributed SGD in certain scenarios. However, 
if gradients before quantization are sparse, then the proven guarantees become drastically weaker. 

{In this work, w}e propose a novel communication efficient algorithm for distributed SGD, which \emph{(i)} allows for compression in both directions of worker-server communication, \emph{(ii)} avoids variance explosion, and \emph{(iii)} achieves SGD-like convergence guarantees under mild assumptions.

\section{Motivating example}
In this small section we motivate the usage of the flattening trick by showing that the algorithm signSGD will fail to converge in situations where the stochstic gradient is sparse - even if the underlying optimization problem is convex. 

Assume that the stochastic gradient oracle for $f$ is $1$-sparse and $d\geq 2$. If we are in a scenario with $N$ workers and each worker draws an independent copy $\bs g_{t,n}$ of the stochastic gradient oracle at the current optimization point $\bs x_t\in \R^d$, then the update rule of signSGD with majority vote reads 
$$\bs x_{t+1}= \bs x_t - \del_t \bs v_t,$$
where
$$\bs v_t = \sgn(\sum_{n=1}^N \sgn( \bs g_{t,n})).$$
We claim that 
$\langle \bs v_t, \bs 1 \rangle\geq 0$, where $\bs 1\in \R^d$ denotes the vector with all coordinates equal to $1$. Indeed, set $\bs q_{t,n}=\sgn(\bs g_{t,n})$ and observe that if there exists a coordinate $i\in [n]$, where $\big(\sum_{n=1}^N \bs q_{t,n})\big)_i<0$, then there exist two disjoint sets $A, B\subset [N]$ with $A\cup B = [N]$ such that $(\bs q_{t,n})_i=-1$ for all $n\in A$ and $(\bs q_{t,n})_i=1$ for all $n\in B$ and such that $|A|>|B|$. But since $\bs q_{t,n}=\sgn(\bs g_{t,n})$ and $\bs g_{t,n}$ is $1$-sparse, 
this implies that $(\bs g_{t,n})_j=0$ for all $j\neq i$ and for all $n\in A$. Therefore, 
$(\bs q_{t,n})_j=1$ for all $j\neq i$ and for all $n\in A$, which implies that $\big(\sum_{n=1}^N \bs q_{t,n}\big)_j \geq 0$ for all $j\neq i$. In total, we obtain that the vector $\sum_{n=1}^N \bs q_{t,n}=\sum_{n=1}^N \sgn( \bs g_{t,n})$ has all entries greater or equal to zero, except for at most one entry. This shows $\langle \sgn(\sum_{n=1}^N \sgn( \bs g_{t,n})), \bs 1 \rangle= d-1 \pm 1$, which is non-negative if $d\geq 2$. In particular, this shows that in every iteration round, signSGD with majority vote can only move into the direction of the halfspace $H_{ -\bs 1}=\{\bs v\in \R^d \,|\, \langle \bs v, -\bs 1\rangle\geq 0\}$, which will of course prevent convergence for many (convex) optimization problems. We remark that a similar conclusion holds true if the gradient descent vector $\bs v_t$ takes the form 
$$\bs v_t = \tfrac{1}{N}\sum_{n=1}^N \sgn( \bs g_{t,n}).$$
Indeed, using that $\langle \sgn( \bs g_{t,n}), \bs 1 \rangle= d-1 \pm 1\geq 0$ (due to the $1$-sparsity of $\bs g_{t,n}$), we obtain
$$\langle \tfrac{1}{N}\sum_{n=1}^N \sgn( \bs g_{t,n}), \bs 1 \rangle = \tfrac{1}{N}\sum_{n=1}^N \langle \sgn( \bs g_{t,n}), \bs 1 \rangle\geq 0.$$
In conclusion, the fact that signSGD cannot explore certain directions if the underlying stochastic gradient is sparse, does not come from the majority vote, but is due to the combination of sparsity and $\sgn$-quantization. 

In order to circumvent these issues, we make use of the flattening trick, which applies a randomized universal sensing basis $\bs \Htransform_\eps$ to the sparse gradient $\bs g_{t,n}$ in order to spread out the energy of the resulting vector $\bs \Htransform_\eps \bs g_{t,n}$ rather evenly across its coordinates with high probability. This will ensure that after $1$-bit quantization, the resulting vector can take many different values in $\{-1,1\}^d$, thereby allowing the associated algorithm (\ouralgo) to explore more directions in $\R^d$.

\section{Preliminaries}


Before explaining our algorithm in Sec.~\ref{sec:fo-sgd}, let us first introduce several useful preliminary concepts. These include the very definition of our distributed computation framework, the compression of communication in this distributed setting, as well as a specific flatting trick for efficient gradient-compression.

\paragraph{Distributed stochastic gradient descent}

Let us consider a differentiable function\footnote{Henceforth, for simplicity, we assume $f$ differentiable over $\bb R^d$, but our work can be adapted to a smaller differentiability domain.} $f:\bb R^d \to \bb R$ that we wish to minimize, that is, we want to find the solution to 
$$
\bs x_\ast \in \argmin_{\bs x} f(\bs x). 
$$
Given a stochastic gradient $\tilde{g}: \bb R^d \to \bb R^d$ such that $\bb E \tilde g(\bs x) = \nabla f(\bs x)$ {for all $\bs x \in\R^d$}, the stochastic gradient descent (SGD) algorithm amounts to iterating $T$ times the following procedure from some initialization $\bs x_0 \in \bb R^d$ (see \eg \cite[Sec. 5.3]{Bach23}): 
\begin{equation}\label{eq:SGD}
    \bs x_{t+1} = \bs x_t - \del_t\, \tilde{g}(\bs x_t), \quad\text{for}\ t=0,\ldots,T,
\end{equation} 
where $\bs x_t$ is the current minimization estimate, and $\del_t$ is a variable step size in the $t$-th iteration. Note that here and in the rest of this work, (random) vectors and (random) matrices (but not functions with vector or matrix image) are denoted by bold symbols, while scalar quantities are written with light symbols.

As \cite{AG17,BW18,KS19}, our approach aims at both distributing the SGD algorithm between a set of workers---separately estimating multiple instances of the gradient---and a central server---gathering the workers gradient and updating the global minimization---, and compressing the communication of information (such as the current estimate) between these agents and the server. We use for this the notion of \emph{stochastic gradient oracle} introduced in, \eg \cite{AG17}, which each worker can appeal to.

\begin{definition}\label{def:stochastic_gradient_oracle}
For a function $\oraclegrad: \Omega\times \R^d\to \R^d$ and a random variable $\bs \xi\in \Omega$, we call $\oraclegrad(\bs \xi, \cdot)$ a \emph{stochastic gradient oracle} for a differentiable function $f: \bb R^d \to \bb R$ if 
\begin{equation*}
\E \big[\oraclegrad(\bs \xi,\bs x)\big] = \bnabla f(\bs x)\quad \text{ for all } \bs x\in \R^d.
\end{equation*}
Further, we will assume that the variance of $\oraclegrad(\bs \xi,\cdot)$ is bounded by $\sigma^2$, \ie
\begin{equation*}
        \E \big[ \eu{\oraclegrad(\bs \xi, \bs x) - \E \big[\oraclegrad(\bs \xi, \bs x)\big]}^2\big] \leq \sigma^2\quad \text{ for all } \bs x\in \R^d.
\end{equation*}
\end{definition}

Let us give a brief example of a stochastic gradient oracle in the case of linear regression.

\begin{example}[Linear regression via least-squares estimation]
Let $\bs A=(\bs a_1, \ldots, \bs a_m)^\top$  be an $m \times d$ matrix of input variables $\bs a_i\in \R^d$ and $\bs y = (y_1, \ldots, y_m)^\top \in \R^m$ a vector of observed values $y_i$, with $i=1,\ldots, m$. Solving the corresponding linear regression problem via least-squares estimation amounts to minimizing the convex objective function 
\begin{equation*}
\ts f(\bs x)=\frac{1}{m}\eu{\bs y-\bs A\bs x}^2=\frac{1}{m}\sum_{i=1}^m(y_i-\scp{\bs a_i}{\bs x})^2 \qquad \text{ for } \bs x\in \R^d.
\end{equation*}
The gradient of $f$ reads
\begin{equation*}
	\ts \bnabla f(\bs x) = \tfrac{1}{m} \sum_{i=1}^m 2(\scp{\bs a_i}{\bs x}-y_i)\bs a_i.
\end{equation*}
For $m'\in \N$, let us denote by $[m]$ the index set $\{1,\ldots,m\}$, and define the random vector $\bs \xi\in [m]^{m'} := \{1,\ldots,m\}^{m'}$ with independent coordinates $\xi_i$, which are uniformly distributed on $[m]$, which we compactly note as $\bs \xi \sim \cl U([m]^{m'})$. It is straightforward to verify that 
\begin{equation}
	\ts \oraclegrad(\bs \xi,\bs x)=\tfrac{1}{m'}\sum_{i=1}^{m'}2(\scp{\bs a_{\xi_i}}{\bs  x}-y_{\xi_i})\bs a_{\xi_i}
\end{equation}
is a stochastic gradient oracle for $f$. We can also observe that, by Lemma~\ref{lem:var_decrease}, the variance of $\oraclegrad(\bs \xi,\cdot)$ decays as $\mathcal{O}(\tfrac{1}{m'})$. \hfill $\diamond$
\end{example}

\paragraph{Quantized communications}

In this work, similarly to \cite{BW18,BZ19,AG17}, we target the definition of a distributed SGD with severely quantized communications between the workers and the server. To this aim, all communications are subject to the application of the following (parametrisable) one-bit quantizer.  
\begin{definition}[$K$-averaged dithered one-bit quantizer]\label{def:K_bit_quantizer}
For $K\in \N$, let us introduce $K$ independent random vectors, or \emph{dithers}, $\bs \tau_1, \ldots, \bs \tau_K\in \R^d$, all uniformly distributed in $[-\lambda, \lambda]^d$ for a \emph{dithering amplitude} $\lambda>0$. Define the \emph{$K$-averaged dithered one-bit quantizer} $\dithQ_{\lambda, K}:\R^d\to [-\lambda, \lambda]^d$ by
\begin{equation*}
\dithQ_{\lambda, K}(\bs x) := \tfrac{\lambda}{K}\sum_{i=1}^K\sgn(\bs x+\bs \tau_i)\quad \text{ for all } \bs x\in \R^d.
\end{equation*}
Here, the $\sgn$-function amounts to applying the sign operator component-wise. If $K=1$, we will simply write $\dithQ_{\lambda}$
instead of $\dithQ_{\lambda, 1}$.
\end{definition}

In this definition, the quantization process is associated with a \emph{dithering procedure}. {The technique of dithering was first introduced in \cite{roberts_picture_1962} in order to remove artefacts from quantized pictures. Moreover, this technique is well-known in the field of quantized compressed sensing, where it is 
successfully applied in order to reconstruct signals from strongly quantized measurements (e.g. see \cite{JC17, DM18, XJ19, JMPS21}). } 

\begin{remark}
Since $\tfrac{K}{\lambda}\cdot \dithQ_{\lambda, K}(\bs x)\in \{-K, -K+2, \ldots, K-2, K\}^d$, we can store $\tfrac{K}{\lambda}\cdot \dithQ_{\lambda, K}(\bs x)$ using at most $\log_2(K+1)\cdot d$ bits. 
\end{remark}

\paragraph{The flattening trick}

 {Anticipating further technical developments provided in Sec.~\ref{sec:fo-sgd}, in our approach, the compression of information between the workers and the server is sustained by a specific transformation whose aim is to``flatten" the gradient before a necessary quantization process. In \cite{BW18,BZ19}, it was indeed observed that if gradients
before quantization are sparse, then the convergence guarantees of the proposed distributed optimization become drastically weaker. Our objective is here to prevent, with high probability, this sparse configuration to occur by pre-encoding the gradient before its quantization, so that to flatten its possibly sparse content, and next to decode the quantize output to recover a gradient estimate. This flattening is first supported by the use of a \emph{universal basis}, which includes the discrete Fourier and Walsh-Hadamard bases.}  

\begin{definition}\label{def:universal_basis}(Universal sensing basis \cite{puy2012universal})
An orthonormal matrix $\bs \Htransform\in \R^{d\times d}$ is called a universal sensing basis if $|\bs \Htransform_{k,l}|=\tfrac{1}{\sqrt{d}}$ for all $k,l\in [d]$. 
\end{definition}

For instance, if $d= 2^k$ for some integer $k$, the matrix $\tfrac{1}{\sqrt{d}}\bs{\mathcal{H}}_k\in \R^{d\times d}$ is a universal sensing basis 
associated with the (fast) Walsh-Hadamard transform $\bs{\mathcal{H}}_k$ iteratively defined by $\bs{\mathcal{H}}_0=1$ and 
\begin{equation*}
\bs{\mathcal{H}}_k := 
\begin{bmatrix}
\bs{\mathcal{H}}_{k-1} & \;\;\; \bs{\mathcal{H}}_{k-1} \\
\bs{\mathcal{H}}_{k-1} & -\bs{\mathcal{H}}_{k-1} 
\end{bmatrix} = \begin{bmatrix}
1 & 1\\1&-1 
\end{bmatrix} \otimes \bs{\mathcal{H}}_{k-1},
\end{equation*}
with $\otimes$ the Kronecker product. If we extend Def.~\ref{def:universal_basis} to the complex field, the discrete Fourier matrix is another example of universal basis \cite{puy2012universal}. 

{Our procedure also needs a randomization of the previous transform, which is affiliated to the technique of ``spread spectrum" \cite{puy2012universal}.}

\begin{definition}\label{def:rand_universal_basis}(Randomized universal sensing basis)
Let $\bs \Htransform\in \R^{d\times d}$ be a universal sensing basis and $\bs \eps\in \{-1,1\}^d$ uniformly distributed. 
We call the random matrix $\bs \Htransform_{\bs \eps}:=\bs \Htransform \bs D_{\bs \eps}$ a \emph{randomized universal sensing basis}
associated to the universal sensing basis $\bs \Htransform$. 
Observe that its inverse $\bs \Htransform_{\bs \eps}^{-1}$ takes the form $\bs \Htransform_{\bs \eps}^{-1}=\bs D_{\bs \eps} \bs \Htransform^\top$.
\end{definition}

The following well-known result (\eg see \cite[Lemma B.2]{Oym16}) shows the impact of a randomized universal basis on the components of a given vector.

\begin{lemma}\label{lem:Hadamard}
Let $\bs \Htransform_\eps\in \R^{d\times d}$ be a randomized universal sensing basis.
For any $\bs x\in \R^d$ and $\alpha \geq 2$,  
\begin{equation*}
\|\bs \Htransform_{\bs \eps} \bs x\|_\infty\leq \alpha \sqrt{\tfrac{\log d}{d}}\eu{\bs x}
\end{equation*}
with probability at least $1-2\exp(-\tfrac{1}{4}\alpha^2\log d)$.
\end{lemma}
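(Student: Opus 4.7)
The plan is to fix an arbitrary $\bs x \in \R^d$ and analyze each coordinate of $\bs H_{\bs\eps}\bs x$ as a Rademacher sum, then combine the per-coordinate tail bounds by a union bound.

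First I would write, for each $k \in [d]$,
$$(\bs H_{\bs\eps} \bs x)_k = \sum_{l=1}^d H_{k,l}\,\eps_l\, x_l,$$
using $\bs H_{\bs\eps} = \bs H\bs D_{\bs\eps}$. Since the $\eps_l$ are i.i.d.\ Rademacher and the coefficients $H_{k,l}\,x_l$ are deterministic, Hoeffding's inequality (or the sub-Gaussian moment generating function bound for Rademacher sums) gives, for any $t>0$,
$$\Pb\bigl\{|(\bs H_{\bs\eps}\bs x)_k|\geq t\bigr\} \;\leq\; 2\exp\!\Bigl(-\tfrac{t^2}{2\sum_{l=1}^d H_{k,l}^2\, x_l^2}\Bigr).$$
The universal-basis property $H_{k,l}^2 = 1/d$ then reduces the denominator to $\tfrac{1}{d}\eu{\bs x}^2$, so
$$\Pb\bigl\{|(\bs H_{\bs\eps}\bs x)_k|\geq t\bigr\} \;\leq\; 2\exp\!\Bigl(-\tfrac{t^2 d}{2\eu{\bs x}^2}\Bigr).$$

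Next I would substitute $t = \alpha\sqrt{\log d/d}\,\eu{\bs x}$, giving the per-coordinate bound $2\exp(-\alpha^2 \log d / 2)$, and apply a union bound over the $d$ coordinates:
$$\Pb\bigl\{\|\bs H_{\bs\eps}\bs x\|_\infty \geq \alpha\sqrt{\tfrac{\log d}{d}}\eu{\bs x}\bigr\} \;\leq\; 2 d\exp\!\bigl(-\tfrac{\alpha^2}{2}\log d\bigr) \;=\; 2\exp\!\bigl(-(\tfrac{\alpha^2}{2}-1)\log d\bigr).$$

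Finally, to match the form stated in the lemma, I would observe that the hypothesis $\alpha\geq 2$ is exactly what guarantees $\tfrac{\alpha^2}{2}-1 \geq \tfrac{\alpha^2}{4}$ (equivalently $\alpha^2 \geq 4$), so the tail probability is at most $2\exp(-\tfrac{1}{4}\alpha^2 \log d)$. There is no real obstacle here; the only subtlety is ensuring the cleanup step in which the $d = e^{\log d}$ factor from the union bound is absorbed into the exponent, which is precisely where the assumption $\alpha \geq 2$ gets used.
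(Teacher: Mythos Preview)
Your proof is correct and follows essentially the same approach as the paper: apply Hoeffding's inequality for Rademacher sums coordinate-wise (using $|H_{k,l}|=1/\sqrt d$), take a union bound over the $d$ coordinates, and then absorb the factor $d=e^{\log d}$ into the exponent via the hypothesis $\alpha\ge 2$. The paper merely routes the first two steps through an auxiliary lemma (valid for any matrix with entries bounded by $\gamma$) before specializing to $\gamma=1/\sqrt d$, but the substance is identical.
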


For convenience, we included a proof of Lemma~\ref{lem:Hadamard} in the appendix.
In the following, we will fix a universal sensing basis $\bs \Htransform$ and suppress the dependence of $\bs \Htransform_{\bs \eps}$ on $\bs \Htransform$.
We can now provide the definition of the gradient encoding and decoding scheme.

\begin{definition}[Encoder]\label{eq:encoder}
Let $\bs \Htransform_{\bs \eps}\in \R^{d\times d}$ be a randomized universal sensing basis and $\dithQ_{\lambda, K}$ an (independent) $K$-averaged dithered one-bit quantizer. Define the map $\encoder_{\bs \eps, \lambda, K}:\R^d\to \{-K, -K+2, \ldots, K-2, K\}^d$ by
\begin{equation*}
\encoder_{\eps, \lambda, K}(\bs x):=\tfrac{K}{\lambda}\cdot \dithQ_{\lambda, K}(\bs \Htransform_{\bs \eps} \bs x)\quad \text{ for all } \bs x\in \R^d.
\end{equation*}
We call $\encoder_{\bs \eps, \lambda, K}$ an \emph{$(\bs \eps, \lambda, K)$-encoder}. If $K=1$, we simply write 
$\encoder_{\bs \eps, \lambda}$ instead of $\encoder_{\bs \eps, \lambda, 1}$ and call it an \emph{$(\bs \eps, \lambda)$-encoder}.
\end{definition}

\begin{definition}[Decoder]\label{eq:decoder}
Let $\bs \Htransform_{\bs \eps}\in \R^{d\times d}$ be a randomized universal sensing basis. Define the map $\decoder_{{\bs \eps}, \lambda, K}:\R^d\to \R^d$ by 
\begin{equation*}
\decoder_{{\bs \eps}, \lambda, K}(\bs x) := \tfrac{\lambda}{K}\cdot \bs \Htransform_{\bs \eps}^{-1} \bs x \quad \text{ for all } \bs x\in \R^d.
\end{equation*}
We call $\decoder_{{\bs \eps}, \lambda, K}$ an \emph{$({\bs \eps}, \lambda, K)$-decoder}. If $K=1$, we simply write 
$\decoder_{{\bs \eps}, \lambda}$ instead of $\decoder_{{\bs \eps}, \lambda, 1}$ and call it an \emph{$({\bs \eps}, \lambda)$-decoder}.
\end{definition}

Let us point out that both the randomized Hadamard transform as well as the dithered quantizer \eqref{def:K_bit_quantizer} are
well known tools in signal processing and compressed sensing (see \eg \cite{OT17, DM18, XJ19, DMS22}).

\section{\ouralgo Algorithm}
\label{sec:fo-sgd}

The goal of the algorithm is to minimize a differentiable function $f:\R^d\to \R$.  In the distributed setting with $N$ workers, every worker has access to a stochastic gradient oracle $\oraclegrad(\bs \xi, \cdot):\R^d\to \R^d$ for $f$. 
The overall procedure is similar to (centralized) distributed SGD algorithm: at every iteration $t$, every worker knows the current optimization point $\bs x_t$ and computes an independent copy $\oraclegrad_{t,n}\in \R^d$  of the stochastic gradient oracle $\oraclegrad(\bs \xi, \bs x_t)$. It sends its computed gradient $\oraclegrad_{t,n}$ to a common parameter server, which then averages the received gradients and sends the computed average back to all workers. All workers then execute a gradient descent step using the same averaged gradient to obtain the next optimization point $\bs x_{t+1}$. 

In order to reduce communication cost which comes with sending full precision gradients between workers and server, distributed \ouralgo proposes to additionally compress gradients in every round of communication. More specifically, every round of communication, first from workers to server and then from server to workers, consists of \emph{(1)} an encoding (or compression) step, \emph{(2)} a transmission step, and \emph{(3)} a decoding (decompression) step.
Let us describe the optimization procedure of \ouralgo in full detail:
\medskip

\noindent\underline{\emph{Initialization:}} The optimization procedure is initialized at some starting point $\bs x_0\in \R^d$, which is known to all workers. 

\medskip
\noindent\underline{\emph{At iteration $t$:}} 
\begin{itemize}
\item Current minimization estimate: $\bs x_t\in \R^d$ 
\item  {Workers computation:} Every worker $n\in [N]$ 
computes an independent copy $\bs \oraclegrad_{t,n}\in \R^d$ of the stochastic gradient oracle $\oraclegrad(\bs \xi, \bs x_t)$.
\item Workers-to-server communication:
\begin{itemize}
\item[1)] \emph{Encoding:} Every worker $n\in [N]$ encodes its gradient $\bs \oraclegrad_{t,n}$ 
by (independently) drawing a uniformly distributed vector ${\bs \eps}_{t,n}\in \{-1,1\}^d$ and applying the associated $({\bs \eps}_{t,n},\lambda)$-encoder $\encoder_{{\bs \eps}_{t,n}, \lambda}$ (see Def.~\ref{eq:encoder}) to~$\oraclegrad_{t,n}$. The resulting vector 
$$\ts \bs q_{t,n}=\encoder_{{\bs \eps}_{t,n}, \lambda}(\bs \oraclegrad_{t,n})$$
can be stored using $d$ bits. 
\item[2)] \emph{Transmission:} All workers $n\in [N]$ send $\bs q_{t,n}$ and ${\bs \eps}_{t,n}$ to the server. 
\item[3)] \emph{Decoding:} For every $n\in [N]$, the server uses the string ${\bs \eps}_{t,n}$ to
decode $\bs q_{t,n}$ by applying the associated $({\bs \eps}_{t,n},\lambda)$-decoder $\decoder_{{\bs \eps}_{t,n}, \lambda}$ (see Def.~\ref{eq:decoder}) to $\bs q_{t,n}$. The resulting vector 
$$\ts \tilde{\bs \oraclegrad}_{t,n}=\decoder_{{\bs \eps}_{t,n}, \lambda}(\bs q_{t,n})$$ 
is an approximation of the original (uncompressed) gradient $\bs \oraclegrad_{t,n}$.
\end{itemize}
\item Server computation: The server computes the average of the decoded gradients:
\begin{equation}\label{eq:server_average}
\ts \servergrad_t = \tfrac{1}{N}\sum_{n=1}^N \tilde{\bs \oraclegrad}_{t,n}.
\end{equation}
\item Server-to-workers communication: 
\begin{itemize}
\item[1)] \emph{Encoding:} The server encodes $\servergrad_t$ 
by (independently) drawing a uniformly distributed vector ${\bs \eps}_{t}^{\server}\in \{-1,1\}^d$ and applying the associated $({\bs \eps}_{t}^{\server},\lambda^\server, K)$-encoder $\encoder_{{\bs \eps}_{t}^{\server}, \lambda^\server, K}$ (see Def.~\ref{eq:encoder}) to $\servergrad_t$. The resulting vector $$\ts \serverqgrad_t=\encoder_{{\bs \eps}_{t}^{\server}, \lambda^\server, K}(\servergrad_t)$$ can be stored using $\log_2(K+1)\cdot d$ bits. 
\item[2)] \emph{Transmission:} The server sends $\serverqgrad_t$ and ${\bs \eps}_{t}^{\server}$ to every worker.
\item[3)] \emph{Decoding:} Every worker uses the string ${\bs \eps}_{t}^{\server}$ to
decode $\serverqgrad_t$ by applying the associated $({\bs \eps}_{t}^{\server},\lambda^\server, K)$-decoder $\decoder_{{\bs \eps}_{t}^{\server}, \lambda^\server, K}$ (see Def.~\ref{eq:decoder}) to $\serverqgrad_t$. The resulting vector $$\ts \workerdgrad_t=\decoder_{{\bs \eps}_{t}^{\server}, \lambda^\server, K}(\serverqgrad_t)$$ is an approximation of $\servergrad_t$.
\end{itemize}
\item {Workers update:} Every worker updates via the gradient descent step
\begin{equation}\label{eq:distributed_SGD:quantized}
    \bs x_{t+1} = \bs x_t - \del_t \,\workerdgrad_t, 
\end{equation} 
where $\del_t>0$ is the step size in the $t$-th iteration.
\end{itemize}

For a visualization of the communication cost in distributed \ouralgo, see Fig.~\ref{fig:distributed_FQ_SGD}.
Note that this procedure is computationally efficient if the universal sensing basis is a normalized fast Walsh-Hadamard transform. Indeed, both the fast Walsh-Hadamard transform and its inverse can be computed in $O(d\log d)$ operations.
Moreover, at every iteration $t$, each worker has to transmit $2d$ bits to the server, and the server has to transmit $(\log_2(K+1)+1)\cdot d$ bits to every worker.

\begin{figure}
     \centering
     \begin{subfigure}[b]{0.49\textwidth}
         \centering
         \includegraphics[width=\textwidth]{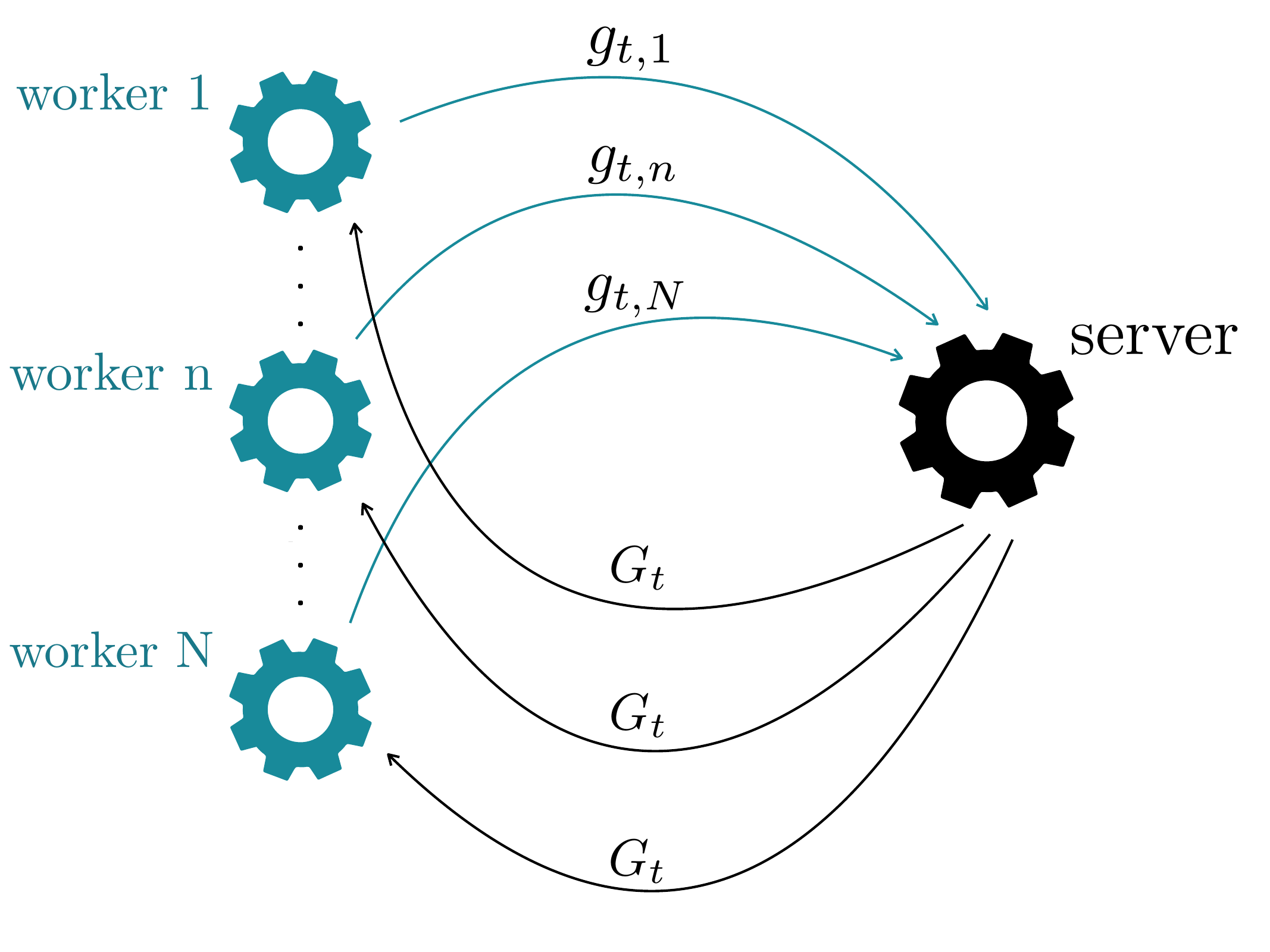}
         \caption{ Communication during the $t$-th round in distributed SGD \\}
         \label{fig:distributed_SGD}
     \end{subfigure}
     \hfill
     \begin{subfigure}[b]{0.49\textwidth}
         \centering
         \includegraphics[width=\textwidth]{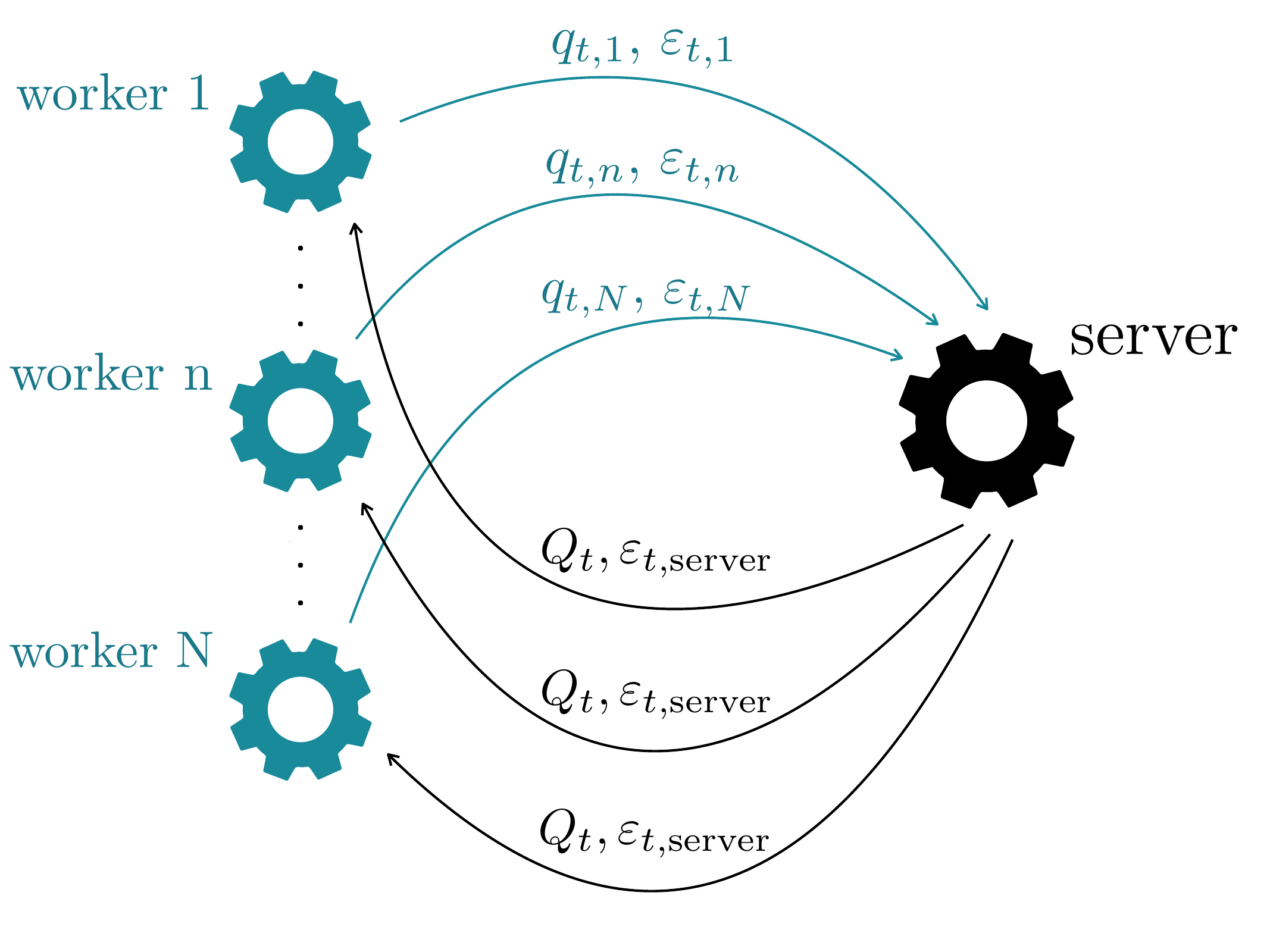}
         \caption{ Quantized communication during the $t$-th round in distributed \ouralgo\\}
         \label{fig:distributed_FQ_SGD}
     \end{subfigure}
     \caption{}
        \label{fig}      
\end{figure}

\section{Convergence Guarantees}
\label{sec:conv_guar}
In order to study the convergence properties of \ouralgo, we first observe that 
this algorithm is an instance of the more general recursion  
\begin{equation}\label{eq:GD:quantized:distributed}
    \bs x_{t+1} = \bs x_t - \del_t \bs v_t, \quad\text{for}\ t=0,\ldots,T,
\end{equation} 
where $\bs v_t$ is of the form 
\begin{equation}\label{eq:GD:quantized:distributed:gradv}
\ts \bs v_t = \Lfct_t^{\server}\Big(\tfrac{1}{N}\sum_{n=1}^N\Lfct_{t,n}\big(\oraclegrad(\bs \xi_{t,n}, \bs x_t)\big)\Big), 
\end{equation}
and where $\oraclegrad(\bs \xi, \cdot):\R^d \to \R^d$ is a stochastic gradient oracle for $f$ according to Def.~\ref{def:stochastic_gradient_oracle}, 
the random vectors $\{\bs \xi_{t,n}\}_{t, n}$ are independent copies of $\bs \xi$, $\{\Lfct_t^{\server}\}_t$ and $\{\Lfct_{t,n}\}_{t,n}$ are some specific (random) functions, independent of $\{\bs \xi_{t,n}\}_{t, n}$.

To see this, let us introduce the following transform map $\Ztransform_{\lambda, K}$, which is directly related to the definition of our quantization scheme.
\begin{definition}\label{def:K_bit_quantizer:flattened}
Given a dithering amplitude $\lambda>0$ and an integer $K\in \N$, let $\dithQ_{\lambda, K}:\R^d\to\R^d$ denote a $K$-averaged dithered one-bit quantizer according to Def.~\ref{def:K_bit_quantizer}. 
Let $\bs \Htransform_{\bs \eps}\in \R^{d\times d}$ be a randomized universal sensing basis, where 
${\bs \eps}\in \{-1,1\}^d$ is independent of $\dithQ_{\lambda, K}$ (\ie independent of the random vectors $\bs \tau_1,\ldots, \bs \tau_K$ which define $\dithQ_{\lambda, K}$). 
Define the map $\Ztransform_{\lambda, K}:\R^d\to \R^d$ by 
\begin{equation*}
\Ztransform_{\lambda, K}(\bs x) := (\bs \Htransform_{\bs \eps})^{-1} \dithQ_{\lambda, K}(\bs \Htransform_{\bs \eps} \bs x)\quad \text{ for all } \bs x\in \R^d.
\end{equation*}
We call $\Ztransform_{\lambda, K}$ a $(\lambda, K)$-transform.
If $K=1$, we simply write $\Ztransform_{\lambda}$ instead of $\Ztransform_{\lambda, 1}$ and call $\Ztransform_\lambda$ a $\lambda$-transform.
\end{definition} 

Using the identity $$\Ztransform_{\lambda, K} = \decoder_{{\bs \eps}, \lambda, K}\circ \encoder_{\bs \eps, \lambda, K},$$
it is easy to see that the update rule of \ouralgo precisely takes the form \eqref{eq:GD:quantized:distributed} with $\{\Lfct_t^{\server}\}_t$ given by independent copies of the random function $\Ztransform_{\lambda^\server, K}$ from  Def.~\ref{def:K_bit_quantizer:flattened} and $\{\Lfct_{t,n}\}_{t,n}$ given by independent copies of the random function $\Ztransform_{\lambda}$ from Def.~\ref{def:K_bit_quantizer:flattened}.
Indeed, the update vector $\workerdgrad_t$ in \eqref{eq:distributed_SGD:quantized} of \ouralgo takes the form: 
\begin{equation}\label{eq:update_vec_FOSGD_decomp}
\workerdgrad_t = \decoder_{{\bs \eps}_{t}^{\server}, \lambda^\server, K}(\encoder_{{\bs \eps}_{t}^{\server}, \lambda^\server, K}(\servergrad_t)) = \Ztransform_{\lambda^\server, K}^{(t)}(\servergrad_t),
\end{equation}
where the vector $\servergrad_t$ is given by \eqref{eq:server_average} and $\Ztransform_{\lambda^\server, K}^{(t)}$ takes the form
\begin{equation*}
\Ztransform_{\lambda^\server, K}^{(t)}(\bs x) = (\bs \Htransform_{\bs \eps^\server_t})^{-1} \dithQ_{\lambda^\server, K}^{(t)}(\bs \Htransform_{\bs \eps^\server_t} \bs x)
\end{equation*}
for 
\begin{equation*}
\dithQ_{\lambda^\server, K}^{(t)}(\bs x) = \tfrac{\lambda^\server}{K}\sum_{i=1}^K\sgn(\bs x+\bs \tau_i^{(t)}),
\end{equation*}
where $\bs \tau_1^{(t)}, \ldots, \bs \tau_K^{(t)}$ are independent and uniformly distributed in $[-\lambda^\server, \lambda^\server]^d$. From \eqref{eq:server_average} and the definition of \ouralgo it follows that the vector $\servergrad_t$ in \eqref{eq:update_vec_FOSGD_decomp} can be decomposed as follows:
\begin{equation}
\ts \servergrad_t = \tfrac{1}{N}\sum_{n=1}^N \tilde{\bs \oraclegrad}_{t,n} = \tfrac{1}{N}\sum_{n=1}^N
\decoder_{{\bs \eps}_{t,n}, \lambda}(\encoder_{{\bs \eps}_{t,n}, \lambda}(\bs \oraclegrad_{t,n})) = \tfrac{1}{N}\sum_{n=1}^N \Ztransform_{\lambda}^{(t,n)}(\bs \oraclegrad_{t,n}),
\end{equation}
where $\bs \oraclegrad_{t,n}$ is an independent copy of the stochastic gradient oracle $\oraclegrad(\bs \xi, \bs x_t)$, i.e., $\bs \oraclegrad_{t,n}= \oraclegrad(\bs \xi_{t,n}, \bs x_t)$ for $\bs \xi_{t,n}$ an independent copy of $\bs \xi$, and $\Ztransform_{\lambda}^{(t,n)}$ takes the form 
\begin{equation*}
\Ztransform_{\lambda}^{(t,n)}(\bs x) = (\bs \Htransform_{\bs \eps_{t,n}})^{-1} \dithQ_{\lambda}^{(t,n)}(\bs \Htransform_{\bs \eps_{t,n}} \bs x)
\end{equation*}
for 
\begin{equation*}
\dithQ_{\lambda}^{(t,n)}(\bs x) = \lambda\sgn(\bs x+\bs \tau^{(t,n)}),
\end{equation*}
where $\bs \tau^{(t,n)}$ are independent and uniformly distributed in $[-\lambda, \lambda]^d$. In total,
\begin{equation}
\workerdgrad_t = \Ztransform_{\lambda^\server, K}^{(t)}(\tfrac{1}{N}\sum_{n=1}^N \Ztransform_{\lambda}^{(t,n)}(\oraclegrad(\bs \xi_{t,n}, \bs x_t))).
\end{equation}

\begin{remark}\label{rem:simple} Notice that $\eu{\Ztransform_{\lambda, K}(\bs x)}\leq \lambda \sqrt{d}$ for all $\bs x\in \R^d$. Indeed, since $\bs \Htransform_{\bs \eps}$ is an orthogonal matrix, we have $\eu{\Ztransform_{\lambda, K}(\bs x)} = \eu{\dithQ_{\lambda, K}(\bs \Htransform_{\bs \eps} \bs x)}$. Therefore, we get the following useful fact by the triangle inequality:
\begin{align*}
\ts \eu{\dithQ_{\lambda, K}(\bs \Htransform_{\bs \eps} \bs x)}\leq \tfrac{\lambda}{K}\sum_{k=1}^K \eu{\sgn(\bs \Htransform_{\bs \eps} \bs x + \bs \tau_k)}=\lambda\sqrt{d}.
\end{align*}
\end{remark}

The following proposition characterizes the convergence of the general algorithm \eqref{eq:GD:quantized:distributed} for general random functions $\{\Lfct_t^{\server}\}_t$ and $\{\Lfct_{t,n}\}_{t,n}$ under the conditions stated above.
 
\begin{proposition}\label{prop:convex:quantized_SGD:distributed} Let $f:\R^d\to \R$ be a convex and differentiable function. Let $\oraclegrad(\bs \xi,\cdot)$ be a stochastic gradient oracle for $f$ with variance bounded by $\sigma^2$.
Let $\bs x_0, \ldots, \bs x_T$ be a sequence of iterates given by \eqref{eq:GD:quantized:distributed} and \eqref{eq:GD:quantized:distributed:gradv}, where $\{\bs \xi_{t,n}\}_{t, n}$ are independent random vectors distributed as $\bs \xi$, and the random functions $\{\Lfct_t^{\server}\}_t$ and $\{\Lfct_{t,n}\}_{t,n}$ are independent of $\{\bs \xi_{t,n}\}_{t, n}$. Set the mean vector $\bar{\bs x}_{T} = c_T\,\sum_{t=0}^T \del_t\, \bs x_t$, with $c_T := (\sum_{t=0}^T \del_t)^{-1}$. Then for any $\bs x_{\ast}\in \R^d$, and $\bs v_t$ defined in \eqref{eq:GD:quantized:distributed:gradv},
\begin{align*}
    &\ts \E[f(\bar{\bs x}_{T})]-f(\bs x_{\ast})\\
    &\ts \leq \tfrac{1}{2} c_T \eu{\bs x_{\ast}-\bs x_0}^2  + c_T \sum_{t=0}^T \del_t^2 \E\big[\eu{\bnabla f(\bs x_t)}^2\big] + \tfrac{2 \sigma^2}{N} \frac{\sum_{t=0}^T \del_t^2}{\sum_{t=0}^T \del_t} \\ 
    &\ts \quad + 2 c_T \sum_{t=0}^T \del_t^2 \E\big[\big\|\bs v_t - \tfrac{1}{N}\sum_{n=1}^N\oraclegrad(\bs \xi_{t,n},\bs x_t)\big\|_2^2\big]\\
    &\ts \quad + c_T \sum_{t=0}^T \del_t \,\big[\big\langle \bs x_t-\bs x_\ast, \tfrac{1}{N}\sum_{n=1}^N \oraclegrad(\bs \xi_{t,n},\bs x_t) - \bs v_t\big\rangle\big].
\end{align*}
\end{proposition}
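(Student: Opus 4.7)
The strategy is the classical ``one-step contraction + telescoping sum'' analysis of SGD, with an extra layer that isolates the compression error $\bs v_t-\tfrac{1}{N}\sum_n \oraclegrad(\bs \xi_{t,n},\bs x_t)$ from the stochastic oracle noise $\tfrac{1}{N}\sum_n \oraclegrad(\bs \xi_{t,n},\bs x_t)-\bnabla f(\bs x_t)$.

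\textbf{Step 1 (one-step expansion).} From the update $\bs x_{t+1}=\bs x_t-\del_t\bs v_t$ I would expand
\[
\eu{\bs x_{t+1}-\bs x_\ast}^2=\eu{\bs x_t-\bs x_\ast}^2-2\del_t\scp{\bs v_t}{\bs x_t-\bs x_\ast}+\del_t^2\eu{\bs v_t}^2,
\]
and rearrange to express $\del_t\scp{\bs v_t}{\bs x_t-\bs x_\ast}$. Adding and subtracting $\bnabla f(\bs x_t)$ under the inner product yields
\[
\del_t\scp{\bnabla f(\bs x_t)}{\bs x_t-\bs x_\ast}=\tfrac12\big(\eu{\bs x_t-\bs x_\ast}^2-\eu{\bs x_{t+1}-\bs x_\ast}^2\big)+\tfrac{\del_t^2}{2}\eu{\bs v_t}^2+\del_t\scp{\bnabla f(\bs x_t)-\bs v_t}{\bs x_t-\bs x_\ast}.
\]

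\textbf{Step 2 (sum and convexify).} Summing over $t=0,\ldots,T$ the first bracket telescopes to $\tfrac12\eu{\bs x_0-\bs x_\ast}^2$. By convexity of $f$, $f(\bs x_t)-f(\bs x_\ast)\le \scp{\bnabla f(\bs x_t)}{\bs x_t-\bs x_\ast}$, and by Jensen applied to the convex combination $\bar{\bs x}_T=c_T\sum_t\del_t\bs x_t$ one gets $f(\bar{\bs x}_T)-f(\bs x_\ast)\le c_T\sum_t\del_t(f(\bs x_t)-f(\bs x_\ast))$. Chaining these yields a deterministic inequality for $f(\bar{\bs x}_T)-f(\bs x_\ast)$ involving $\eu{\bs v_t}^2$ and $\scp{\bnabla f(\bs x_t)-\bs v_t}{\bs x_t-\bs x_\ast}$.

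\textbf{Step 3 (split the two noise sources and take expectations).} Writing $\bar{\bs g}_t:=\tfrac{1}{N}\sum_{n=1}^N\oraclegrad(\bs \xi_{t,n},\bs x_t)$, I decompose $\bs v_t-\bnabla f(\bs x_t)=(\bs v_t-\bar{\bs g}_t)+(\bar{\bs g}_t-\bnabla f(\bs x_t))$ and $\bs v_t=\bnabla f(\bs x_t)+(\bs v_t-\bnabla f(\bs x_t))$. For the $\eu{\bs v_t}^2$ term I iterate $\eu{a+b}^2\le 2\eu{a}^2+2\eu{b}^2$ twice to get $\eu{\bs v_t}^2\le 2\eu{\bnabla f(\bs x_t)}^2+4\eu{\bs v_t-\bar{\bs g}_t}^2+4\eu{\bar{\bs g}_t-\bnabla f(\bs x_t)}^2$. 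Now take expectations: since $\{\bs \xi_{t,n}\}_n$ are independent copies of $\bs \xi$ and independent of $\bs x_t$ and of all the $\Lfct$-functions, conditioning on $\bs x_t$ gives $\E[\bar{\bs g}_t-\bnabla f(\bs x_t)\mid \bs x_t]=\bs 0$ and (from the i.i.d.\ averaging identity used in the preceding example, cited as Lemma~\ref{lem:var_decrease}) $\E[\eu{\bar{\bs g}_t-\bnabla f(\bs x_t)}^2\mid \bs x_t]\le \sigma^2/N$. The first of these kills the inner-product contribution $\E\scp{\bar{\bs g}_t-\bnabla f(\bs x_t)}{\bs x_t-\bs x_\ast}=0$, leaving only the compression-error inner product $\E\scp{\bar{\bs g}_t-\bs v_t}{\bs x_t-\bs x_\ast}$; the second turns $4\del_t^2\E\eu{\bar{\bs g}_t-\bnabla f(\bs x_t)}^2$ into $4\del_t^2\sigma^2/N$, which after multiplication by $c_T/2$ and using $c_T=(\sum_t\del_t)^{-1}$ produces the announced $\tfrac{2\sigma^2}{N}\tfrac{\sum\del_t^2}{\sum\del_t}$ term.

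\textbf{Step 4 (collect).} Matching the resulting five groups of terms to the claimed bound is now bookkeeping: the initial distance gives $\tfrac12 c_T\eu{\bs x_\ast-\bs x_0}^2$, the $2\eu{\bnabla f(\bs x_t)}^2$ part gives the gradient-norm sum with coefficient $c_T\del_t^2$, the $4\eu{\bs v_t-\bar{\bs g}_t}^2$ part gives the $2c_T\sum\del_t^2\E\eu{\bs v_t-\bar{\bs g}_t}^2$ compression term, the variance term yields the $\sigma^2/N$ contribution, and the residual cross term gives the last line. The main subtle step—and the one to be careful with—is Step 3: one must justify that $\bs x_t$ is measurable with respect to a $\sigma$-algebra with respect to which $\bar{\bs g}_t-\bnabla f(\bs x_t)$ is centered, which requires using the stated independence of the $\bs \xi_{t,n}$ from both the past iterates and the compression maps $\Lfct_t^\server,\Lfct_{t,n}$. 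Everything else is standard AM--GM/telescoping algebra.
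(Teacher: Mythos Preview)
Your proposal is correct and follows essentially the same route as the paper's proof: both combine Jensen/convexity on $\bar{\bs x}_T$, the telescoping identity from the update rule (the paper packages this as Lemma~\ref{lem:alg_fact}), the two-fold use of $\|a+b\|^2\le 2\|a\|^2+2\|b\|^2$ on $\bs v_t$, the $\sigma^2/N$ variance reduction for $\bar{\bs g}_t$, and the conditional centeredness $\E[\bar{\bs g}_t-\bnabla f(\bs x_t)\mid \bs x_t]=0$ to kill one cross term. The only cosmetic difference is the order in which terms are added and subtracted (you insert $\bnabla f(\bs x_t)$ first and then $\bar{\bs g}_t$, whereas the paper inserts $\bar{\bs g}_t$ first and then $\bs v_t$), which is immaterial.
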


\begin{proof}[Proof of Prop.~\ref{prop:convex:quantized_SGD:distributed}] 
By convexity of $f$, we get
\begin{multline}
    \ts \E[f(\bar{\bs x}_{T})]-f(\bs x_{\ast})=\E [f(\bar{\bs x}_{T})-f(\bs x_{\ast})]
    \\ \ts \leq \E \big( c_T \big(\sum_{t=0}^T \del_t \, f(\bs x_t)\big) - f(\bs x_{\ast}) \big) = c_T \big(\sum_{t=0}^T \del_t\,\E [f(\bs x_t) - f(\bs x_{\ast})]\big).\label{eq:prop:convex:quantized_SGD:distributed:first_bound} 
\end{multline}
Let us write $\bs \xi_t := (\bs \xi_{t,1}, \ldots, \bs \xi_{t,N})$, define the vector $\bar{\bs g}_t := \tfrac{1}{N}\sum_{n=1}^N \oraclegrad(\bs \xi_{t,n},\bs x_t)$, and let $\E_L$ denote expectation with respect to the randomness of all (random) functions $\{\Lfct_t^{\server}, \Lfct_{t,n}\}_{t,n}$. 
By convexity of $f$ and since $\oraclegrad$ is a stochastic gradient oracle for~$f$, 
\begin{equation*}
    \ts f(\bs x_t) - f(\bs x_\ast) \leq \scp{\bs x_t-\bs x_\ast}{\bnabla f(\bs x_t)}=\langle \bs x_t-\bs x_\ast, \E_{\bs \xi_t} [\bar{\bs g}_t]\rangle.
\end{equation*}
Since $\bs x_t$ does not depend on $\bs \xi_t, \ldots, \bs \xi_T$, we obtain
\begin{align*}
\E_{\bs \xi_0, \ldots, \bs \xi_{T}} [f(\bs x_t) - f(\bs x_{\ast})]
&\ts = \E_{\bs \xi_0, \ldots, \bs \xi_{t-1}} [f(\bs x_t) - f(\bs x_{\ast})]\\
 &\ts \leq \E_{\bs \xi_0, \ldots, \bs \xi_{t-1}}[\langle \bs x_t-\bs x_\ast, \E_{\bs \xi_t} [\bar{\bs g}_t]\rangle]\\
 &\ts = \E_{\bs \xi_0, \ldots, \bs \xi_{t-1}}[\E_{\bs \xi_t} [\langle \bs x_t-\bs x_\ast, \bar{\bs g}_t\rangle]]\\
 &\ts = \E_{\bs \xi_0, \ldots, \bs \xi_{T}}[\langle \bs x_t-\bs x_\ast, \bar{\bs g}_t\rangle].
 \end{align*}
Since $\{\bs \xi_0, \ldots, \bs \xi_T\}$ and $\{\Lfct_t^{\server}, \Lfct_{t,n}\}_{t,n}$ are independent, it follows 
\begin{align*}
\ts \E [f(\bs x_t) - f(\bs x_{\ast})]
\leq \E [\langle \bs x_t-\bs x_\ast, \bar{\bs g}_t\rangle].
 \end{align*}
Adding and subtracting $\bs v_t = \Lfct_t^{\server}\big(\tfrac{1}{N}\sum_{n=1}^N \Lfct_{t,n} ( \oraclegrad(\bs \xi_{t,n},\bs x_t))\big)$, we obtain 
 \begin{equation}\label{eq:prop:convex:quantized_SGD:distributed:single_summand}
\ts \E[f(\bs x_t) - f(\bs x_{\ast})] \leq  \E [\langle \bs x_t-\bs x_\ast, \bs v_t \rangle] + \E [\langle \bs x_t-\bs x_\ast, \bar{\bs g}_t - \bs v_t \rangle ]. 
 \end{equation}

Lemma~\ref{lem:alg_fact} implies 
$$
\ts \sum_{t=0}^T \del_t\, \langle \bs x_t-\bs x_\ast, \bs v_t \rangle \leq \tfrac{1}{2}\eu{\bs x_{\ast}-\bs x_0}^2 +\tfrac{1}{2} \sum_{t=0}^T \del_t^2 \|\bs v_t\|_2^2. 
$$ 

In combination with \eqref{eq:prop:convex:quantized_SGD:distributed:first_bound} and \eqref{eq:prop:convex:quantized_SGD:distributed:single_summand} this shows 
\begin{align*}
    \ts \E[f(\bar{\bs x}_{T})]-f(\bs x_{\ast}) &\leq \tfrac{1}{2}c_T \eu{\bs x_{\ast}-\bs x_0}^2
    \ts + \tfrac{1}{2} c_T \sum_{t=0}^T \del_t^2\, \E[\|\bs v_t\|_2^2]\\ 
    &\quad + c_T \sum_{t=0}^T \delta_t\,\E \big[\big\langle \bs x_t-\bs x_\ast, \bar{\bs g}_t - \bs v_t\big\rangle\big].
\end{align*}
Using the triangle inequality, we obtain  
$$
\ts \E[\|\bs v_t\|_2^2] \leq 2 \E[\|\bnabla f(\bs x_t)\|_2^2] + 2 \E[\|\bs v_t - \bnabla f(\bs x_t)\|_2^2]
$$
and 
$$
\ts \E[\|\bs v_t - \bnabla f(\bs x_t)\|_2^2] \leq 2\E[\|\bs v_t -\bar{\bs g}_t\|_2^2] + 2\E[\|\bar{\bs g}_t-\bnabla f(\bs x_t)\|_2^2].
$$
Since $\oraclegrad(\bs \xi, \cdot)$ has variance bounded by $\sigma^2$
and $\bs \xi_{t,1}, \ldots, \bs \xi_{t,N}$ are independent copies of $\bs \xi$, 
$\bar{\bs g}_t$ is the mean of $N$ independent random vectors, which each have variance bounded by $\sigma^2$.
Using that $\E_{\bs \xi_t}(\bar{\bs g}_t)=\bnabla f(\bs x_t)$, it follows from Corollary~\ref{coro:var_decrease} that
\begin{equation*}
\ts \E_{\bs \xi_t}  \big[\big\|\bar{\bs g}_t-\bnabla f(\bs x_t)\big\|_2^2\big]\leq \tfrac{\sigma^2}{N}. 
\end{equation*}
By independence of the random vectors $\bs \xi_0, \ldots, \bs \xi_T$, and the independence of $\{\bs \xi_0, \ldots, \bs \xi_T\}$ and $\{\Lfct_t^{\server}, \Lfct_{t,n}\}_{t,n}$, this implies
\begin{equation*}
\ts \E  \big[\big\|\bar{\bs g}_t-\bnabla f(\bs x_t)\big\|_2^2\big]\leq \tfrac{\sigma^2}{N}.
\end{equation*}
Putting everything together, we obtain the result.
\end{proof}

The following immediate corollary, which concerns distributed SGD \emph{without} compressed communication, allows us to assess the bound in Prop.~\ref{prop:convex:quantized_SGD:distributed}.
\begin{corollary} 
\label{cor:uncompress-main-result}
Let $f:\R^d\to \R$ be convex and differentiable. Let $\oraclegrad(\bs \xi,\cdot)$ be a stochastic gradient oracle for $f$ with variance bounded by $\sigma^2$.
Let $\bs x_0, \ldots, \bs x_T$ be a sequence of iterates given by 
\begin{equation}
\bs x_{t+1} = \bs x_t - \del_t \cdot \tfrac{1}{N}\sum_{n=1}^N\oraclegrad(\bs \xi_{t,n}, \bs x_t),
\end{equation}
where $\{\bs \xi_{t,n}\}_{t, n}$ are independent random vectors distributed as $\bs \xi$. Set the mean vector $\bar{\bs x}_{T} = c_T\,\sum_{t=0}^T \del_t\, \bs x_t$, with $c_T := (\sum_{t=0}^T \del_t)^{-1}$. Then for any $\bs x_{\ast}\in \R^d$,
\begin{equation}
    \label{eq:uncompress-main-result}
    \ts \E[f(\bar{\bs x}_{T})]-f(\bs x_{\ast}) \leq \cl C_{f,T,\oraclegrad}(\bs x_{\ast}, \bs x_0)
\end{equation} 
with 
$$
\ts \cl C_{f,T,\oraclegrad}(\bs x_{\ast}, \bs x_0)
 :=\tfrac{1}{2} c_T \eu{\bs x_{\ast}-\bs x_0}^2  + c_T \sum_{t=0}^T \del_t^2 \E\big[\eu{\bnabla f(\bs x_t)}^2\big] + \tfrac{2\sigma^2}{N} \frac{\sum_{t=0}^T \del_t^2}{\sum_{t=0}^T \del_t}.
$$
\end{corollary}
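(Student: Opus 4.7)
The plan is to derive the corollary as an immediate specialization of Proposition~\ref{prop:convex:quantized_SGD:distributed}. The key observation is that uncompressed distributed SGD is exactly the instance of the general recursion \eqref{eq:GD:quantized:distributed}--\eqref{eq:GD:quantized:distributed:gradv} in which all of the random ``compression'' maps $\Lfct_t^{\server}$ and $\Lfct_{t,n}$ are taken to be the deterministic identity map on $\R^d$. These identity maps are trivially independent of the $\{\bs \xi_{t,n}\}_{t,n}$, so the hypotheses of the proposition are satisfied.

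Under this choice, the aggregated update vector becomes
\begin{equation*}
\ts \bs v_t \;=\; \Lfct_t^{\server}\Big(\tfrac{1}{N}\sum_{n=1}^N \Lfct_{t,n}(\oraclegrad(\bs \xi_{t,n},\bs x_t))\Big) \;=\; \tfrac{1}{N}\sum_{n=1}^N \oraclegrad(\bs \xi_{t,n},\bs x_t),
\end{equation*}
so the recursion \eqref{eq:GD:quantized:distributed} reduces precisely to the uncompressed distributed SGD update stated in the corollary.

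Next I would plug this identity for $\bs v_t$ into the bound of Proposition~\ref{prop:convex:quantized_SGD:distributed}. With $\bs v_t = \tfrac{1}{N}\sum_{n=1}^N\oraclegrad(\bs \xi_{t,n},\bs x_t)$, the last two summands of that bound vanish deterministically: the squared-norm term $\E\big[\big\|\bs v_t - \tfrac{1}{N}\sum_{n=1}^N\oraclegrad(\bs \xi_{t,n},\bs x_t)\big\|_2^2\big]$ is zero, and similarly the inner-product term $\E\big[\big\langle \bs x_t-\bs x_\ast, \tfrac{1}{N}\sum_{n=1}^N \oraclegrad(\bs \xi_{t,n},\bs x_t) - \bs v_t\big\rangle\big]$ is zero. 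What remains is exactly
\begin{equation*}
\ts \tfrac{1}{2} c_T \eu{\bs x_{\ast}-\bs x_0}^2  + c_T \sum_{t=0}^T \del_t^2 \E\big[\eu{\bnabla f(\bs x_t)}^2\big] + \tfrac{2\sigma^2}{N}\,\frac{\sum_{t=0}^T \del_t^2}{\sum_{t=0}^T \del_t} \;=\; \cl C_{f,T,\oraclegrad}(\bs x_{\ast}, \bs x_0),
\end{equation*}
which is the claimed bound \eqref{eq:uncompress-main-result}.

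There is no genuine obstacle here: the only thing worth checking carefully is that the identity map is an admissible choice for $\Lfct_t^{\server}$ and $\Lfct_{t,n}$ in the framework of Proposition~\ref{prop:convex:quantized_SGD:distributed} (it is, since the proposition only assumes that these functions are independent of the stochastic-oracle noise $\{\bs \xi_{t,n}\}_{t,n}$, and deterministic maps are trivially independent of anything). Once this is observed, the corollary is a one-line specialization.
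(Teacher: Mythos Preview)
Your proposal is correct and matches the paper's approach: the corollary is stated there as an ``immediate'' consequence of Proposition~\ref{prop:convex:quantized_SGD:distributed}, obtained exactly by taking $\Lfct_t^{\server}$ and $\Lfct_{t,n}$ to be identity maps so that the two compression-error terms vanish. Nothing further is needed.
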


From this corollary, we thus observe that the bound on $\E[f(\bar{\bs x}_{T})]-f(\bs x_{\ast})$ provided by Prop.~\ref{prop:convex:quantized_SGD:distributed} amounts to adding two terms to the bound $\cl C_{f,T,\oraclegrad}(\bs x_{\ast}, \bs x_0)$. These encode the impact of the compression scheme through the interplay of the general functions $\Lfct_t^{\server}$ and $\Lfct_{t,n}$. In the context of \ouralgo, the specific forms of these two functions allows us to derive the following theorem.

\begin{theorem}\label{thm:Hadamard:all_compressed} 
Let $f:\R^d\to \R$ be convex and differentiable, and $\oraclegrad(\bs \xi,\cdot)$ be a stochastic gradient oracle for $f$ such that, for $B > 0$, 
\begin{equation*}
\eu{\oraclegrad(\bs \xi, \bs x)}\leq B\quad \text{ for all } \bs x\in \R^d.
\end{equation*}
There exists an absolute constant $C\geq 2$ such that the following holds.
For parameters $\alpha, \alpha^\server\geq C$, set the dithering amplitudes $\lambda$ and $\lambda^\server$ to
$$
\lambda = \alpha B\sqrt{\tfrac{\log d}{d}},\ \text{and}\ \lambda^\server = \alpha^\server \lambda \sqrt{\log d}.
$$
Let $\bs x_0, \ldots, \bs x_T$ be the sequence of iterates given by \eqref{eq:GD:quantized:distributed} and \eqref{eq:GD:quantized:distributed:gradv} with constant learning rate $\del_t=\del$, 
$\{\bs \xi_{t,n}\}_{t, n}$ independent copies of the random variable $\bs \xi$, 
$\Lfct_{t,n}$ independent copies of the $\lambda$-transform $\Ztransform_{\lambda}:\R^d\to \R^d$ from  Def.~\ref{def:K_bit_quantizer:flattened}, and 
$\Lfct_t^{\server}$ independent copies of the $(\lambda^\server, K)$-transform $\Ztransform_{\lambda^\server, K}:\R^d\to \R^d$ from Def.~\ref{def:K_bit_quantizer:flattened}. We assume that all involved random variables are independent. Let us set $\bar{\bs x}_{T}=\tfrac{1}{T+1}\sum_{t=0}^T\bs x_t$ and define the average distance 
$$
\ts \eta^*_T(\bs x_{\ast}) := \E \Big[ \tfrac{1}{T+1}\sum_{t=0}^T\eu{\bs x_t-\bs x_\ast}\Big].
$$

\noindent Then for any $\bs x_{\ast}\in \R^d$,
\begin{align*}
    &\E[f(\bar{\bs x}_{T})]-f(\bs x_{\ast})\\ \nonumber
    &\leq \cl C_{f,T,\oraclegrad}(\bs x_{\ast}, \bs x_0) + \alpha^2B^2\,\del\,\Big( \tfrac{8\log d}{N}
    + \tfrac{4(\alpha^\server)^2 \log^2 d}{K} \Big)\\ \nonumber
    &\quad  + 
 32 B^2 \,\del\,\big(N \exp(-\tfrac{1}{8}\alpha^2\log d) + \alpha^2  \log (d) \exp(-\tfrac{1}{8} (\alpha^\server)^2 \log d)\big)\\ \nonumber
    &\quad +  4B\,\eta^*_T(\bs x_{\ast}) \,\big( N \exp(-\tfrac{1}{8}\alpha^2\log d) + \alpha  \sqrt{\log d} \exp(-\tfrac{1}{8}(\alpha^\server)^2\log d) \big).
\end{align*}
Here, 
$$
\ts \cl C_{f,T,\oraclegrad}(\bs x_{\ast}, \bs x_0)
 =\tfrac{1}{2(T+1)\del} \eu{\bs x_{\ast}-\bs x_0}^2  + \tfrac{\del}{T+1} \sum_{t=0}^T \E\big[\eu{\bnabla f(\bs x_t)}^2\big] + \tfrac{2\sigma^2\del}{N}.
$$
\end{theorem}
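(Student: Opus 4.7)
The plan is to apply Prop.~\ref{prop:convex:quantized_SGD:distributed} directly: the \ouralgo\ update fits the general recursion \eqref{eq:GD:quantized:distributed}--\eqref{eq:GD:quantized:distributed:gradv} with $\Lfct_t^\server$ and $\Lfct_{t,n}$ independent copies of, respectively, the $(\lambda^\server,K)$-transform and the $\lambda$-transform from Def.~\ref{def:K_bit_quantizer:flattened}. With the constant step size $\del_t\equiv\del$, one has $c_T=1/((T+1)\del)$, $c_T\sum_t\del_t^2=\del$, and $\sum_t\del_t^2/\sum_t\del_t=\del$, so the first three lines of the bound in Prop.~\ref{prop:convex:quantized_SGD:distributed} coincide exactly with $\cl C_{f,T,\oraclegrad}(\bs x_\ast,\bs x_0)$. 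Writing $\bar{\bs g}_t:=\tfrac1N\sum_n\oraclegrad(\bs\xi_{t,n},\bs x_t)$, it then suffices to bound, uniformly in $t$, the variance $\E[\|\bs v_t-\bar{\bs g}_t\|_2^2]$ with prefactor $2\del$ and the bias $\E[\langle \bs x_t-\bs x_\ast,\bar{\bs g}_t-\bs v_t\rangle]$ with prefactor $\del$, which will produce the last three lines of the theorem.

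The core tool is a pair of good events at each round. For each worker $n$ define $G_{t,n}:=\{\|\bs\Htransform_{\bs\eps_{t,n}}\oraclegrad_{t,n}\|_\infty\leq\lambda\}$, and for the server $G_t^\server:=\{\|\bs\Htransform_{\bs\eps_t^\server}\servergrad_t\|_\infty\leq\lambda^\server\}$. Because $\|\oraclegrad_{t,n}\|_2\leq B$ and, by Rem.~\ref{rem:simple}, $\|\servergrad_t\|_2\leq\lambda\sqrt d=\alpha B\sqrt{\log d}$, applying Lem.~\ref{lem:Hadamard} with the prescribed dithering amplitudes yields $\Pb(G_{t,n}^c\,|\,\oraclegrad_{t,n})\leq 2\exp(-\tfrac14\alpha^2\log d)$ and $\Pb(G_t^{\server,c}\,|\,\servergrad_t)\leq 2\exp(-\tfrac14(\alpha^\server)^2\log d)$. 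The elementary identity powering everything is that for a dither $\bs\tau$ uniform on $[-\lambda,\lambda]^d$, the one-bit quantizer $\lambda\sgn(\cdot+\bs\tau)$ is coordinatewise unbiased with per-coordinate variance $\leq\lambda^2$ whenever the input has infinity norm $\leq\lambda$. Combined with orthogonality of $\bs\Htransform_{\bs\eps}$ and the $K$-fold averaging on the server, this yields, on the corresponding good events, conditional unbiasedness and per-coordinate variances $\lambda^2$ (worker) and $(\lambda^\server)^2/K$ (server).

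For the variance term I use the triangle inequality $\|\bs v_t-\bar{\bs g}_t\|_2^2\leq 2\|\bs v_t-\servergrad_t\|_2^2+2\|\servergrad_t-\bar{\bs g}_t\|_2^2$ and bound each piece by splitting over its good event. The server piece contributes $d(\lambda^\server)^2/K=\alpha^2(\alpha^\server)^2B^2\log^2 d/K$ on $G_t^\server$, and on its complement the deterministic bound $\|\bs v_t-\servergrad_t\|_2\leq(\lambda+\lambda^\server)\sqrt d$ from Rem.~\ref{rem:simple}, weighted by $\Pb(G_t^{\server,c})$. For the worker piece, on $\bigcap_n G_{t,n}$ the errors $\Ztransform_\lambda^{(t,n)}(\oraclegrad_{t,n})-\oraclegrad_{t,n}$ are independent across $n$ and conditionally mean-zero, so their average has expected squared norm $\leq d\lambda^2/N=\alpha^2B^2\log d/N$; on the complement I use Rem.~\ref{rem:simple} and a union bound over $N$ workers, producing the factor $Ne^{-\alpha^2\log d/4}$. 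Substituting the values of $\lambda,\lambda^\server$ and absorbing polynomial prefactors such as $\log d$ into the exponent (thereby weakening $\exp(-\alpha^2\log d/4)$ to $\exp(-\alpha^2\log d/8)$) reproduces the first two correction lines of the theorem.

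The main obstacle is the bias term. I condition on the filtration $\cl F_t$ generated by iterates and randomness up to round $t$, so that $\bs x_t$ is $\cl F_t$-measurable while round-$t$ encoders and dithers are independent of $\cl F_t$, and rewrite the bias as $\E[\langle \bs x_t-\bs x_\ast,\E[\bar{\bs g}_t-\bs v_t\,|\,\cl F_t]\rangle]$. On $A_t:=G_t^\server\cap\bigcap_n G_{t,n}$, the conditional unbiasedness identity applied server-first (given $\servergrad_t,\bs\eps_t^\server$) and workers next (given $\oraclegrad_{t,n},\bs\eps_{t,n}$) cancels the contribution to $\E[\bar{\bs g}_t-\bs v_t\,|\,\cl F_t]$. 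The delicate issue is that $G_t^\server$ depends on $\servergrad_t$ and hence on worker randomness, so restricting to the joint event $A_t$ entangles the two layers; the iterated-expectation cancellation therefore leaks a small amount of slack that I absorb into the bad-event contribution. On $A_t^c$, Cauchy--Schwarz with $\|\bar{\bs g}_t\|_2\leq B$ and $\|\bs v_t\|_2\leq\lambda^\server\sqrt d$ (Rem.~\ref{rem:simple}), together with independence of $\bs x_t$ from the round-$t$ randomness, yields a bound of the form $(B+\lambda^\server\sqrt d)\,\E[\|\bs x_t-\bs x_\ast\|_2]\,\Pb(A_t^c)$ with $\Pb(A_t^c)\leq 2Ne^{-\alpha^2\log d/4}+2e^{-(\alpha^\server)^2\log d/4}$. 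Summing over $t$, averaging, and once more absorbing polynomial factors into the exponents yields the final line of the bound, with $\eta^*_T(\bs x_\ast)$ appearing precisely from $\tfrac1{T+1}\sum_t\E[\|\bs x_t-\bs x_\ast\|_2]$.
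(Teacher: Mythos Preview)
Your overall strategy matches the paper's proof: apply Prop.~\ref{prop:convex:quantized_SGD:distributed}, then control the variance term $\E[\|\bs v_t-\bar{\bs g}_t\|_2^2]$ and the bias term $\E[\langle\bs x_t-\bs x_\ast,\bar{\bs g}_t-\bs v_t\rangle]$ via the flattening events furnished by Lemma~\ref{lem:Hadamard}. Your variance estimates (split $\bs v_t\!\to\!\servergrad_t\!\to\!\bar{\bs g}_t$, use independence across workers on the good event, absorb polylog factors into the exponent) reproduce the paper's bounds, which are packaged there as Corollary~\ref{lem:Hadamard:average} and Lemma~\ref{lem:K:rad}.

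The one genuine soft spot is your treatment of the bias on the joint event $A_t=G_t^\server\cap\bigcap_n G_{t,n}$. As you correctly note, $G_t^\server$ depends on $\servergrad_t=\bs w_t$ and hence on the worker dithers $\bs\tau_{t,n}$, so after taking $\E_{\bs\tau_t^\server}$ and reducing to $\E[1_{A_t}(\bar{\bs g}_t-\bs w_t)]$ you cannot simply integrate out $\bs\tau_{t,n}$ to get zero. Your claim that the resulting ``leak'' can be absorbed is true but is not actually carried out; one would have to write $1_{A_t}=1_{\cap_n G_{t,n}}-1_{\cap_n G_{t,n}}1_{(G_t^\server)^c}$, observe that the first piece does vanish after $\E_{\bs\tau_t}$, and bound the second piece crudely by $(B+\lambda\sqrt d)\,\Pb((G_t^\server)^c\mid\bs w_t)\,\|\bs x_t-\bs x_\ast\|$. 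The paper sidesteps this entirely by \emph{splitting the bias additively first},
\[
\E[\langle\bs x_t-\bs x_\ast,\bar{\bs g}_t-\bs v_t\rangle]
=\E[\langle\bs x_t-\bs x_\ast,\bar{\bs g}_t-\bs w_t\rangle]
+\E[\langle\bs x_t-\bs x_\ast,\bs w_t-\bs v_t\rangle],
\]
and then attaching to each summand its \emph{own} good event: $\mathcal A=\bigcap_n G_{t,n}$ for the worker piece and $\mathcal A^\server=G_t^\server$ for the server piece. Since $\mathcal A$ does not depend on $\bs\tau_{t,n}$ and $\mathcal A^\server$ does not depend on $\bs\tau_t^\server$, the conditional unbiasedness from Lemma~\ref{lem:K_bit_quantizer} kills each good-event contribution cleanly, with no entanglement and no extra bookkeeping. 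Adopting this additive split would close the gap in your argument and also makes the final constants transparent.
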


\begin{proof}
By Prop.~\ref{prop:convex:quantized_SGD:distributed}, with $\bs v_t = \Lfct_t^{\server}\big(\tfrac{1}{N}\sum_{n=1}^N \Lfct_{t,n} (\oraclegrad(\bs \xi_{t,n},\bs x_t))\big)$ and the mean gradient oracle $\bar{\bs g}_t = \tfrac{1}{N}\sum_{n=1}^N \oraclegrad(\bs \xi_{t,n},\bs x_t)$, we get 
\begin{multline}\label{eq:thm:Hadamard:all_compressed:main}
    \ts \E[f(\bar{\bs x}_{T})]-f(\bs x_{\ast}) \leq \cl C_{f,T,\oraclegrad}(\bs x_{\ast}, \bs x_0)
    \ts  + \tfrac{2\del}{T+1} \sum_{t=0}^T\E[\|\bs v_t - \bar{\bs g}_t\|_2^2] + \tfrac{1}{T+1}\sum_{t=0}^T \E [\langle \bs x_t-\bs x_\ast, \bar{\bs g}_t - \bs v_t \rangle].
\end{multline}
Define the vector 
\begin{equation*}
\ts \bs w_t := \tfrac{1}{N}\sum_{n=1}^N \Lfct_{t,n} (\oraclegrad(\bs \xi_{t,n},\bs x_t))
\end{equation*}
so that $\bs v_t = \Lfct_t^{\server}(\bs w_t)$. We proceed by estimating 
\begin{equation}\label{eq:thm:Hadamard:all_compressed:first}
\ts \E[\|\bs v_t - \bar{\bs g}_t\|_2^2] \leq 2\E[\|\bs v_t - \bs w_t\|_2^2] + 2\E[\|\bs w_t - \tfrac{1}{N}\sum_{n=1}^N\oraclegrad(\bs \xi_{t,n},\bs x_t)\|_2^2]
\end{equation}
and writing 
\begin{equation}
\label{eq:thm:Hadamard:all_compressed:second}
\ts \E [\langle \bs x_t-\bs x_\ast, \bar{\bs g}_t - \bs v_t \rangle] = \E [\langle \bs x_t-\bs x_\ast, \bar{\bs g}_t - \bs w_t \rangle] + \E [\langle \bs x_t-\bs x_\ast, \bs w_t - \bs v_t \rangle].
\end{equation}
It remains to estimate the terms on the right hand side of \eqref{eq:thm:Hadamard:all_compressed:first} and 
\eqref{eq:thm:Hadamard:all_compressed:second}.
For $t\in \{0,\ldots, T\}$ and $n\in \{1, \ldots, N\}$, let  
\begin{equation*}
\Lfct_{t,n}(\bs x)= \lambda\, \bs \Htransform_{{\bs \eps}_{t,n}}^{-1} \sgn(\bs \Htransform_{{\bs \eps}_{t,n}} \bs x+\bs \tau_{t,n})
\end{equation*}
and 
\begin{equation*}
\ts \Lfct_t^{\server}(\bs x) = \tfrac{\lambda^\server}{K}\, \bs \Htransform_{{\bs \eps}_t^\server}^{-1} \sum_{k=1}^K\sgn(\bs \Htransform_{{\bs \eps}_t^\server} \bs x+\bs \tau_{t,k}^\server)
\end{equation*}
for independent and uniformly distributed random vectors
${\bs \eps}_{t,n}\in \{-1,1\}^d$, $\bs \tau_{t,n}\in [-\lambda, \lambda]^d$, ${\bs \eps}^\server_t\in \{-1,1\}^d, \bs \tau_{t, k}^\server\in [-\lambda^\server, \lambda^\server]^d$.

We define the sets of vectors $\bs \xi_t := \{\bs \xi_{t,1}, \ldots, \bs \xi_{t,N}\}$, ${\bs \eps}_t := \{\bs \eps_{t,1}, \ldots, \bs \eps_{t,N}\}$, $\bs \tau_t := \{\bs \tau_{t,1}, \ldots, \bs \tau_{t,N}\}$, $\bs \tau_t^\server := \{\bs \tau_{t, 1}^\server, \ldots, \bs \tau_{t, K}^\server\}$, as well as the global set
\begin{equation*}
\cl X_t := \{\bs \xi_0, \ldots, \bs \xi_{t}, \bs \eps_0,\ldots, \bs \eps_{t}, \bs \tau_0,\ldots, \bs \tau_{t}, \bs \eps_0^\server, \ldots, \bs \eps_t^\server,  \bs \tau_{0}^\server, \ldots, \bs \tau_t^\server\}. 
\end{equation*}
Observe that $\bs x_t$ is completely determined by $\cl X_{t-1}$.
Conditioned on $\cl X_{t-1}$ and $\bs \xi_t$, we have $\max_{n\in [N]}\eu{\oraclegrad(\bs \xi_{t,n},\bs x_t)}\leq B$. Therefore, 
Corollary~\ref{lem:Hadamard:average} and Lemma~\ref{lem:K:rad} imply 
$$
\ts \E_{\bs \eps_t, \bs \tau_t}\big[\big\|\bs w_t - \bar{\bs g}_t\big\|_2^2\big] \leq \tfrac{2\alpha^2B^2\log d}{N} + 8N B^2 \exp(-\tfrac{1}{8}\alpha^2\log d).
$$
Taking expectation with respect to $\cl X_{t-1}$ and $\bs \xi_t$, this yields
\begin{equation}
\label{eq:thm:Hadamard:all_compressed:first:1}
\ts\E[\|\bs w_t - \bar{\bs g}_t\|_2^2] \leq \tfrac{2\alpha^2B^2\log d}{N} + 8N B^2 \exp(-\tfrac{1}{8}\alpha^2\log d).
\end{equation}
For $n\in [N]$ define the events  
\begin{equation*}
\mathcal{A}_n:=\{\|\bs \Htransform_{\bs \eps_{t,n}} \oraclegrad(\bs \xi_{t,n},\bs x_t)\|_\infty\leq \lambda\}
\end{equation*}
and $\mathcal{A}:=\cap_{n=1}^N \mathcal{A}_n$.
By Lemma~\ref{lem:Hadamard} we have for any $n\in [N]$, 
\begin{equation*}
\Pb_{\bs \eps_{t,n}}\big(\|\bs \Htransform_{\bs \eps_{t,n}} \oraclegrad(\bs \xi_{t,n},\bs x_t)\|_\infty> \alpha\sqrt{\tfrac{\log d}{d}}\eu{\oraclegrad(\bs \xi_{t,n},\bs x_t)}\big)
\leq 2\exp(-\tfrac{1}{4}\alpha^2\log d).
\end{equation*}
Since $\lambda \geq \alpha\sqrt{\tfrac{\log d}{d}}\eu{\oraclegrad(\bs \xi_{t,n},\bs x_t)}$ for all $n\in [N]$, it follows 
\begin{align*}
\Pb_{\bs \eps_t}(\mathcal{A}^C)&\ts \leq \sum_{n=1}^N \Pb_{\bs \eps_{t,n}}(\mathcal{A}_{n}^C)\\
&\ts \leq \sum_{n=1}^N \Pb_{\bs \eps_{t,n}}\big(\|\bs \Htransform_{\eps_{t,n}} \oraclegrad(\bs \xi_{t,n},\bs x_t)\|_\infty> \alpha\sqrt{\tfrac{\log d}{d}}\eu{\oraclegrad(\bs \xi_{t,n},\bs x_t)}\big)\\
&\leq 2N\exp(-\tfrac{1}{4}\alpha^2\log d).
\end{align*}
Lemma~\ref{lem:K_bit_quantizer} implies that on the event $\mathcal{A}$, $\E_{\bs \tau_{t,n}} \big[\Lfct_{t,n} (\oraclegrad(\bs \xi_{t,n},\bs x_t))\big]=\oraclegrad(\bs \xi_{t,n},\bs x_t)$ for all $n\in[N]$, so that, on this event, $\bb E_{\bs \tau_{t}} \bs w_t = \bar{\bs g}_t$. Therefore,
$$
\ts \E [\langle \bs x_t-\bs x_\ast, \bar{\bs g}_t - \bs w_t\rangle]\\
 =\E \big[1_{\mathcal{A}^C} \cdot \langle \bs x_t-\bs x_\ast, \bar{\bs g}_t - \bs w_t \rangle\big]. 
$$
Using Cauchy-Schwarz and the trivial estimate $\sup_{\bs x\in\R^d}\|\Lfct_{t,n}(\bs x)\|_2\leq \lambda \sqrt{d}$, we obtain 
\begin{align}\label{eq:thm:Hadamard:all_compressed:second:1}
\ts \E [\langle \bs x_t-\bs x_\ast, \bar{\bs g}_t - \bs w_t\rangle]&\leq (B+\lambda \sqrt{d})\cdot \E \big[1_{\mathcal{A}^C}\eu{\bs x_t-\bs x_\ast}\big]\\ \nonumber
&=(B+\lambda \sqrt{d})\cdot \E_{\cl X_{t-1}} \big[\eu{\bs x_t-\bs x_\ast}\E_{\bs \xi_t}\E_{\bs \eps_t}[1_{\mathcal{A}^C}]\big]\\ \nonumber
&\leq (B+\lambda \sqrt{d})\cdot 2N\exp(-\tfrac{1}{4}\alpha^2\log d) \cdot \E\big[\eu{\bs x_t-\bs x_\ast}\big].
\end{align}
Clearly, $B\leq \lambda \sqrt{d}$ and 
\begin{align*}
\lambda \sqrt{d} \exp(-\tfrac{1}{4}\alpha^2\log d)&= B \alpha \sqrt{\log d}\exp(-\tfrac{1}{4}\alpha^2\log d)\\
&\leq B \exp(-\tfrac{1}{8}\alpha^2\log d)
\end{align*}
for all $\alpha\geq C$, where $C>0$ denotes an absolute constant that is chosen large enough. 

Observe that $\eu{\bs w_t}\leq \lambda \sqrt{d}$. Lemma~\ref{lem:K:rad} implies that 
\begin{equation*}
\E_{\bs \eps_t^\server, \bs \tau_t^\server}\big[\big\|\bs v_t - \bs w_t\big\|_2^2\big]
\leq \tfrac{(\alpha^\server)^2\cdot (\lambda \sqrt{d})^2 \cdot \log d}{K}
+ 8(\lambda \sqrt{d})^2 \exp(-\tfrac{1}{8} (\alpha^\server)^2 \log d).
\end{equation*}
Using $\lambda =\alpha B\sqrt{\tfrac{\log d}{d}}$ and taking expectations with respect to $\cl X_{t-1}, \bs \xi_t, \bs \eps_t, \bs \tau_t$, we obtain 
\begin{equation}\label{eq:thm:Hadamard:all_compressed:first:2}
\E\big[\big\|\bs v_t - \bs w_t\big\|_2^2\big] \leq \tfrac{(\alpha^\server)^2\cdot \alpha^2 B^2 \log^2 d}{K} + 8\alpha^2 B^2 \log( d )\exp(-\tfrac{1}{8} (\alpha^\server)^2 \log d).
\end{equation}
Define the event 
\begin{equation*}
\mathcal{A}^\server := \{\|\bs \Htransform_{\bs \eps_t^\server}\bs w_t\|_\infty\leq \lambda^\server\}.
\end{equation*}
By Lemma~\ref{lem:Hadamard},  
\begin{equation*}
\Pb_{\bs \eps_t^\server}\big(\|\bs \Htransform_{\bs \eps_t^\server}\bs w_t\|_\infty> \alpha^\server\sqrt{\tfrac{\log d}{d}}\eu{\bs w_t}\big)
\leq 2\exp(-\tfrac{1}{4}(\alpha^\server)^2\log d).
\end{equation*}
Since $\lambda^\server\geq \alpha^\server\sqrt{\tfrac{\log d}{d}}\eu{\bs w_t}$, it follows 
\begin{equation*}
\Pb_{\bs \eps_t^\server}((\mathcal{A}^\server)^C)\leq 2\exp(-\tfrac{1}{4}(\alpha^\server)^2\log d).
\end{equation*}
Lemma~\ref{lem:K_bit_quantizer} implies that on the event $\mathcal{A}^\server$, 
\begin{equation*}
\E_{\bs \tau_t^\server}[\bs v_t] = \E_{\bs \tau_t^\server} \big[\Lfct_t^{\server}(\bs w_t)\big] = \bs w_t.
\end{equation*}
Therefore, 
\begin{equation*}
\E \big[\big\langle \bs x_t-\bs x_\ast, \bs w_t - \bs v_t\big\rangle\big]
= \E \big[1_{(\mathcal{A}^\server)^C}\cdot\big\langle \bs x_t-\bs x_\ast, \bs w_t - \bs v_t\big\rangle\big].
\end{equation*}
Using Cauchy-Schwarz and the trivial bound $\eu{\Lfct_t^{\server}(\bs w_t)}\leq \lambda^\server \sqrt{d}$ (see Remark~\ref{rem:simple}), we obtain 
\begin{align}\label{eq:thm:Hadamard:all_compressed:second:2}
&\E \big[\big\langle \bs x_t-\bs x_\ast, \bs w_t - \bs v_t\big\rangle\big]\\ \nonumber
&\leq (\lambda \sqrt{d} + \lambda^\server \sqrt{d})\cdot \E \big[ 1_{(\mathcal{A}^\server)^C} \eu{\bs x_t-\bs x_\ast}\big]\\ \nonumber
&\leq (\lambda \sqrt{d} + \lambda^\server \sqrt{d})\cdot \E_{\cl X_{t-1}} \big[ \eu{\bs x_t-\bs x_\ast} \E_{\bs \xi_t, \bs \eps_t, \bs \tau_t} \E_{\bs \eps_t^\server}[1_{(\mathcal{A}^\server)^C}] \big]\\ \nonumber
&\leq 2\sqrt{d}\cdot (\lambda + \lambda^\server)\cdot \exp(-\tfrac{1}{4}(\alpha^\server)^2\log d) \cdot\E \big[ \eu{\bs x_t-\bs x_\ast}\big].
\end{align}
Finally, observe that $\lambda \leq \lambda^\server $ and 
\begin{align*}
\lambda^\server \sqrt{d}\cdot \exp(-\tfrac{1}{4}(\alpha^\server)^2\log d) &= (\lambda \sqrt{d})\cdot \alpha^\server \sqrt{\log d} \cdot \exp(-\tfrac{1}{4}(\alpha^\server)^2\log d)\\
&\leq \lambda \sqrt{d} \cdot \exp(-\tfrac{1}{8}(\alpha^\server)^2\log d)
\end{align*}
for all $\alpha^\server\geq C$, where $C>0$ denotes an absolute constant that is chosen large enough.
The result now follows by combining \eqref{eq:thm:Hadamard:all_compressed:main}, \eqref{eq:thm:Hadamard:all_compressed:first}, \eqref{eq:thm:Hadamard:all_compressed:second}, \eqref{eq:thm:Hadamard:all_compressed:first:1}, 
\eqref{eq:thm:Hadamard:all_compressed:second:1},
\eqref{eq:thm:Hadamard:all_compressed:first:2} and \eqref{eq:thm:Hadamard:all_compressed:second:2}.
\end{proof}

{\section{Non-convex Convergence Rate}
\label{sec:non-convex}

We now show that \ouralgo can be used to find critical points of a non-convex, smooth function $f$. This amounts to showing that the gradient of $f$ on the iterates $\bs x_t$ of \ouralgo vanishes when $t$ increases. We here follow the procedure explained in \cite{BW18,BZ19} and adapt it to our specific gradient compression scheme. 

We thus assume that $f$ is $L$-smooth for some constant $L>0$, that is, for any $\bs p, \bs q \in \bb R^d$,
\begin{equation}
\label{eq:def-L-smooth}
\ts f(\bs q) \leq f(\bs p) + \scp[\big]{\nabla f(\bs p)}{\bs q - \bs p} + \frac{1}{2} L \|\bs q - \bs p\|^2.
\end{equation}

Let us first recap the main concepts involved in our algorithm. It is thus an instance of the more general recursion  
$$
\bs x_{t+1} = \bs x_t - \del_t \bs v_t, \quad\text{for}\ t=0,\ldots,T,\ \bs x_0 \in \bb R^d,
$$
where $\bs v_t$ is of the form 
$$
\ts \bs v_t = \Lfct_t^{\server}\Big(\tfrac{1}{N}\sum_{n=1}^N\Lfct_{t,n}\big(\oraclegrad(\bs \xi_{t,n}, \bs x_t)\big)\Big), 
$$
and where $\oraclegrad(\bs \xi, \cdot):\R^d \to \R^d$ is a stochastic gradient oracle for $f$ according to Def.~\ref{def:stochastic_gradient_oracle}, 
the random vectors $\{\bs \xi_{t,n}\}_{t, n}$ are independent copies of $\bs \xi$, $\{\Lfct_t^{\server}\}_t$ and $\{\Lfct_{t,n}\}_{t,n}$ are some specific (random) functions, independent of $\{\bs \xi_{t,n}\}_{t, n}$.

More specifically, after simplification, the functions $\Lfct_t^{\server}$ and $\Lfct_{t,n}$ are defined~by
\begin{align*}
    \Lfct_t^{\server}(\bs p)&\ts = \tfrac{\lambda^\server}{K}\sum_{k=1}^K \bs \Htransform_{\bs \eps_t^\server}^{-1} \sgn(\bs \Htransform_{\bs \eps_t^\server} \bs p + \bs \tau^\server_{t,k}),&&\bs p \in \bb R^d,\\
    \Lfct_{t,n}(\bs p)&\ts =  \lambda \bs \Htransform_{\bs \eps_{t,n}}^{-1} \sgn(\bs \Htransform_{\bs \eps_{t,n}} \bs p + \bs \tau_{t,n}),&&\bs p \in \bb R^d,\\
    \ts \tfrac{1}{N}\sum_{n=1}^N \Lfct_{t,n}(\bs p_n)&\ts = \tfrac{\lambda}{N}\sum_{n=1}^N \bs \Htransform_{\bs \eps_{t,n}}^{-1} \sgn(\bs \Htransform_{\bs \eps_{t,n}} \bs p_n + \bs \tau_{t,n}),&&\bs p_n \in \bb R^d,
\end{align*}
with the random vectors $\bs \tau^\server_{t,k} \sim_{\rm iid} \cl U(\lambda^\server [-1,1]^d)$, $\bs \tau_{t,n} \sim_{\rm iid} \cl U(\lambda [-1,1]^d)$, $\bs \eps_t^\server \sim_{\rm iid} \cl U(\{\pm 1\}^d)$, and $\bs \eps_{t,n} \sim_{\rm iid} \cl U(\{\pm 1\}^d)$. We notice the similarity of shape between the first and the third quantities in the above expressions.  

Moreover, defining the \emph{clipping} function $\clip_\lambda(x)$ equal to $x/\lambda$ if $|x| \leq \lambda$ and $\sgn(x)$ otherwise (to be also applied componentwise onto vectors), you have shown the following properties, for $\bs p, \bs p_n \in \bb R^{d}$,
\begin{align*}
    \bb E_{\bs \tau^\server_{t,:}}\Lfct_t^{\server}(\bs p)&\ts = \lambda^\server \bs \Htransform_{\bs \eps_t^\server}^{-1} \clip_{\lambda^\server}\big(\bs \Htransform_{\bs \eps_t^\server} \bs p),\\
    \bb E_{\bs \tau_{t,n}} \Lfct_{t,n}(\bs p)&\ts =  \lambda \bs \Htransform_{\bs \eps_{t,n}}^{-1} \clip_\lambda(\bs \Htransform_{\bs \eps_{t,n}} \bs p),\\
    \bb E_{\bs \tau_{t,:}} \ts \tfrac{1}{N}\sum_{n=1}^N \Lfct_{t,n}(\bs p_n)&\ts = \tfrac{\lambda}{N}\sum_{n=1}^N \bs \Htransform_{\bs \eps_{t,n}}^{-1} \clip_\lambda(\bs \Htransform_{\bs \eps_{t,n}} \bs p_n).
\end{align*}
Therefore, given $t$, if $\|\bs \Htransform_{\bs \eps_t^\server} \bs p\|_\infty \leq \lambda^\server$, and  for all $\|\bs \Htransform_{\bs \eps_{t,n}} \bs p_n\|_\infty \leq \lambda$ for all $n$, we have 
$$
\ts \bb E_{\bs \tau^\server_{t,:}}\Lfct_t^{\server}(\bs p) \ts = \bb E_{\bs \tau_{t,n}} \Lfct_{t,n}(\bs p) = \bs p, \quad \bb E_{\bs \tau_{t,:}} \tfrac{1}{N}\sum_{n=1}^N \Lfct_{t,n}(\bs p_n) = \tfrac{1}{N}\sum_{n=1}^N \bs p_n.
$$
Consequently, using the same tools that you develop later around \eqref{eq:thm:Hadamard:all_compressed:second:1} and \eqref{eq:thm:Hadamard:all_compressed:second:2} (with events similar to $\cl A$ and $\cl A'$), one could show, that by taking total expectation, that if there are some $B >0$ such that if $\|\bs p\|, \|\bs p_n\| \leq B$ for all $n$,  
\begin{align}
\label{eq:lpd-server}
\big|\scp[\big]{\bs u}{\bb E_{\bs \epsilon^\server_{t}, \bs \tau^\server_{t,:}} \Lfct_t^{\server}(\bs p) - \bs p}\big|&\leq \rho^\server \|\bs u\|,\\
\label{eq:lpd-worker}
\big|\scp[\big]{\bs u}{\bb E_{\bs \epsilon_{t,n}, \bs \tau_{t,n}} \Lfct_t^{\server}(\bs p) - \bs p}\big|&\leq \rho \|\bs u\|,
\end{align}
for any $\bs u \in \bb R^d$, and some distortions 
\begin{align*}
\rho^\server&\ts \leq 2 \sqrt d (\lambda + \lambda^\server) \exp(- \frac{(\alpha^\server)^2}{4} \log d),\\
\rho&\ts \leq 2 N (B + \lambda \sqrt d) \exp(- \frac{\alpha^2}{4} \log d),     
\end{align*} 
provided that, up to some log factors, $\lambda^\server \geq C \alpha^\server B$ and $\lambda \geq C \alpha B$ (there are certainly a few bolts to screw here to get the bounds right).

With properties \eqref{eq:lpd-server} and \eqref{eq:lpd-worker}, we can show the following proposition. 
\begin{proposition}
If $f$ is $L$-smooth, $\max_n \|\bs g_{t,n}\| \leq B$, $\alpha^\server, \alpha = \Omega((\log T)^q)$, then \ouralgo with step sizes $\delta_t$ reaches
\begin{align*}
&\ts c_T \sum_{t=0}^{T-1} \del_t \bb E \|\nabla f(\bs x_t)\|^2\\
&\ts \leq c_T \big(f(\bs x_0) - f(\bs x_*)\big) + (\rho + \rho^\server) B + \frac{1}{2} L (\lambda^\server)^2 d c_T \sum_{t=0}^{T-1} \del_t^2 ,
\end{align*}
with $(\rho + \rho^\server) = O(T^{-q})$, and $c_T = (\sum_{t=0}^{T-1} \del_t)^{-1}$. In particular, for $\delta_t = 1/\sqrt T$, we get 
\begin{align*}
&\ts \frac{1}{T} \sum_{t=0}^{T-1} \bb E \|\nabla f(\bs x_t)\|^2\\
&\ts \leq \frac{1}{\sqrt T} \big(f(\bs x_0) - f(\bs x_*)\big) + (\rho + \rho^\server) B + \frac{1}{2 \sqrt T} L (\lambda^\server)^2 d.
\end{align*}

\end{proposition}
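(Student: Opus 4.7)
The plan is to run the standard smooth non-convex SGD argument with our compressed update $\bs v_t$ in the role of the stochastic gradient. Applying the $L$-smoothness inequality \eqref{eq:def-L-smooth} at $\bs p = \bs x_t$ and $\bs q = \bs x_{t+1} = \bs x_t - \del_t \bs v_t$ gives the per-step descent bound
\begin{equation*}
 f(\bs x_{t+1}) \leq f(\bs x_t) - \del_t \scp{\bnabla f(\bs x_t)}{\bs v_t} + \tfrac{L\del_t^2}{2}\|\bs v_t\|^2.
\end{equation*}
Taking total expectation, rearranging to isolate $\del_t \bb E \|\bnabla f(\bs x_t)\|^2$, telescoping over $t=0,\ldots,T-1$, using $\bb E f(\bs x_T) \geq f(\bs x_\ast)$, and multiplying by $c_T$ produces the stated bound --- provided I can (i) lower-bound $\bb E \scp{\bnabla f(\bs x_t)}{\bs v_t}$ by $\bb E \|\bnabla f(\bs x_t)\|^2 - (\rho+\rho^\server)B$, and (ii) upper-bound $\bb E \|\bs v_t\|^2$ by $(\lambda^\server)^2 d$.

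Step (ii) is immediate from Remark~\ref{rem:simple}: $\|\bs v_t\| = \|\Ztransform_{\lambda^\server, K}^{(t)}(\bs w_t)\| \leq \lambda^\server \sqrt d$ deterministically. For step (i), I would chain the two low-distortion bounds \eqref{eq:lpd-server} and \eqref{eq:lpd-worker}. Writing $\bs w_t = \tfrac{1}{N}\sum_{n=1}^N \Lfct_{t,n}(\oraclegrad(\bs\xi_{t,n},\bs x_t))$ and $\bar{\bs g}_t = \tfrac{1}{N}\sum_{n=1}^N \oraclegrad(\bs\xi_{t,n},\bs x_t)$, I first apply \eqref{eq:lpd-server} to $\bs v_t = \Lfct_t^\server(\bs w_t)$ conditionally on $\bs w_t$, then the worker-side bound \eqref{eq:lpd-worker} (applied to the average $\bs w_t$, exploiting independence across $n$) conditionally on $\cl X_{t-1}$ and $\bs\xi_t$, and finally use the oracle unbiasedness $\bb E[\bar{\bs g}_t \mid \bs x_t] = \bnabla f(\bs x_t)$. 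Chaining these three and specialising to $\bs u = \bnabla f(\bs x_t)$ --- with $\|\bnabla f(\bs x_t)\| \leq \bb E\|\oraclegrad(\bs\xi,\bs x_t)\| \leq B$ --- yields the claimed $(\rho+\rho^\server)B$ slack. To close the rate claim, I would verify directly from the displayed formulae that $\rho$ and $\rho^\server$ scale like polynomial factors in $(\alpha, \alpha^\server, B, N, \lambda, \sqrt d)$ times $\exp(-\tfrac{1}{4}\alpha^2\log d)$ and $\exp(-\tfrac{1}{4}(\alpha^\server)^2\log d)$ respectively, so that picking $\alpha, \alpha^\server = \Omega((\log T)^q)$ lets the exponential dominate and forces $(\rho+\rho^\server) = O(T^{-q})$. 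Specialising finally to $\del_t = 1/\sqrt T$ makes $c_T = 1/\sqrt T$ and $c_T\sum_t \del_t^2 = 1/\sqrt T$, producing the second displayed form.

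The main obstacle, to my mind, is not the smoothness bookkeeping but the rigorous derivation of \eqref{eq:lpd-server} and \eqref{eq:lpd-worker} themselves --- the excerpt explicitly flags these as needing ``a few bolts to screw''. Following the template of \eqref{eq:thm:Hadamard:all_compressed:second:1}--\eqref{eq:thm:Hadamard:all_compressed:second:2}, one would define, for each compression stage, the good events $\{\|\bs \Htransform_{\bs \eps}\bs p\|_\infty \leq \lambda\}$ on which Lemma~\ref{lem:K_bit_quantizer} forces exact conditional unbiasedness of the dithered sign map. On the complementary bad events, one bounds the bias contribution by Cauchy--Schwarz, using the deterministic worst case $\|\Lfct(\cdot)\| \leq \lambda\sqrt d$ and the Hadamard concentration of Lemma~\ref{lem:Hadamard}. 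The delicate part is threading these conditional expectations through the composition of the server-side $\Lfct_t^\server$ with the worker average $\tfrac{1}{N}\sum_n \Lfct_{t,n}$, and correctly placing the $\sqrt d$ and $N$ prefactors in $\rho$ versus $\rho^\server$.
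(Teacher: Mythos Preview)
Your proposal is correct and follows essentially the same route as the paper: apply $L$-smoothness to get the per-step descent inequality, lower-bound $\scp{\nabla f(\bs x_t)}{\bb E_{\bs\theta_t}\bs v_t}$ by chaining \eqref{eq:lpd-server} and \eqref{eq:lpd-worker} through $\bs w_t$ and $\bar{\bs g}_t$ with $\bs u=\nabla f(\bs x_t)$, bound $\|\bs v_t\|^2\leq(\lambda^\server)^2 d$ via Remark~\ref{rem:simple}, telescope, and use $\|\nabla f(\bs x_t)\|\leq B$. You are also right that the paper itself treats \eqref{eq:lpd-server}--\eqref{eq:lpd-worker} as stipulated inputs rather than deriving them in full, so your identification of that as the residual technical work matches the paper's own caveat.
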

\begin{proof}
From the $L$-smoothness of $f$, we have 
\begin{align}
    f(\bs x_{t+1}) - f(\bs x_t)&\ts \leq \scp[\big]{\nabla f(\bs x_t)}{\bs x_{t+1} - \bs x_t} + \frac{1}{2} L \|\bs x_{t+1} - \bs x_t\|^2\nonumber\\
    &\ts = - \del_t \scp[\big]{\nabla f(\bs x_t)}{\bs v_t} + \frac{1}{2} L \del_t^2 \|\bs v_t\|^2. \label{eq:using-L-smooth}
\end{align}
Moreover, using \eqref{eq:lpd-server} and \eqref{eq:lpd-worker}, and writing $\bs g_{n,t} := \oraclegrad(\bs \xi_{t,n}, \bs x_t)$, $\bs \theta_t := \{\bs \epsilon^\server_{t}, \bs \tau^\server_{t,:}, \bs \epsilon_{t,:}, \bs \tau_{t,:}\}$, $\bs w_t := \tfrac{1}{N}\sum_{n=1}^N\Lfct_{t,n}(\bs g_{n,t})$, and $\bar g_t := \tfrac{1}{N}\sum_{n=1}^N \bs g_{n,t}$, we find, for any $\bs u$ (possibly random but independent of $\bs \theta_t$,
\begin{align*}
\ts \scp[\big]{\bs u}{\bb E_{\bs \theta_t} \bs v_t}&\ts =\scp[\big]{\bs u}{\bb E_{\bs \theta_t} \Lfct_t^{\server}(\bs w_t)} = \scp[\big]{\bs u}{\bb E_{\bs \theta_t} \bs w_t} + \scp[\big]{\bs u}{\bb E_{\bs \theta_t}[\Lfct_t^{\server}(\bs w_t) - \bs w_t]}\\
&\ts \geq \scp[\big]{\bs u}{\bb E_{\bs \theta_t} \bs w_t} - \rho^\server \|\bs u\| \\
&\ts = \tfrac{1}{N}\sum_{n=1}^N \scp[\big]{\bs u}{\bb E_{\bs \theta_t} \Lfct_{t,n}(\bs g_{n,t})} - \rho^\server \|\bs u\| \\
&\ts = \scp[\big]{\bs u}{\bar{\bs g}_{t}} + \tfrac{1}{N}\sum_{n=1}^N \scp[\big]{\bs u}{\bb E_{\bs \theta_t} \Lfct_{t,n}(\bs g_{n,t}) - \bs g_{n,t}} - \rho^\server \|\bs u\| \\
&\ts \geq \scp[\big]{\bs u}{\bar{\bs g}_{t}} - (\rho + \rho^\server) \|\bs u\|.
\end{align*}

Therefore, taking $\bs u = \nabla f(\bs x_t)$, $\scp[\big]{\nabla f(\bs x_t)}{\bb E_{\bs \theta_t} \bs v_t} \geq \scp[\big]{\nabla f(\bs x_t)}{\bar{\bs g}_{t}} - (\rho + \rho^\server) \|\bs \nabla f(\bs x_t)\|$, a total expectation applied to  \eqref{eq:using-L-smooth} (paying attention to the factoring of expectations) gives
\begin{align*}
    \bb E f(\bs x_{t+1}) - \bb E f(\bs x_t)&\ts \leq - \del_t \bb E \|\nabla f(\bs x_t)\|^2 + \del_t (\rho + \rho^\server) \bb E\|\nabla f(\bs x_t)\| + \frac{1}{2} L \del_t^2 \|\bs v_t\|^2.
\end{align*}

This means that, for the minimum $f(\bs x_*)$,
\begin{align*}
    f(\bs x_0) - f(\bs x_*)
    &\ts \geq f(\bs x_0) - f(\bs x_T)\\
    &\ts \geq \sum_{t=0}^{T-1} \bb E [f(\bs x_t) - f(\bs x_{t+1})]\\
    &\ts \geq \sum_{t=0}^{T-1} \del_t \bb E \|\nabla f(\bs x_t)\|^2\\
    &\ts \quad - (\rho + \rho^\server) \sum_{t=0}^{T-1} \del_t \bb E\|\nabla f(\bs x_t)\| - \frac{1}{2} L \sum_{t=0}^{T-1} \del_t^2 \|\bs v_t\|^2.
\end{align*}
Equivalently,  
\begin{align*}
&\ts c_T \sum_{t=0}^{T-1} \del_t \bb E \|\nabla f(\bs x_t)\|^2\\
&\ts \leq c_T \big(f(\bs x_0) - f(\bs x_*)\big) + (\rho + \rho^\server) c_T \sum_{t=0}^{T-1} \del_t \bb E\|\nabla f(\bs x_t)\|\\
&\ts \qquad + \frac{1}{2} L c_T \sum_{t=0}^{T-1} \del_t^2 \|\bs v_t\|^2.
\end{align*}
Therefore, if we further assume that $\|\nabla f\| \leq B$ (\eg in a certain region of space)---which is by the way involved by imposing $\max_n \|\bs g_n\| \leq B$ as earlier---, then 
\begin{align*}
&\ts c_T \sum_{t=0}^{T-1} \del_t \bb E \|\nabla f(\bs x_t)\|^2\\
&\ts \leq c_T \big(f(\bs x_0) - f(\bs x_*)\big) + (\rho + \rho^\server) B + \frac{1}{2} L c_T \sum_{t=0}^{T-1} \del_t^2 \|\bs v_t\|^2.
\end{align*}
Here also, taking $\alpha^\server, \alpha = \Omega((\log T)^q)$ ensures that $(\rho + \rho^\server) B$ be arbitrarily small.  The final bound is reached by considering that $\|\bs v_t\|^2 \leq (\lambda^\server)^2 d$.
\end{proof}

}

\appendix

\section{Auxiliary Results}
This first lemma is central to the characterization of the general algorithm presented in Sec.~\ref{sec:conv_guar}, as well as in the proof of Prop.~\ref{prop:convex:quantized_SGD:distributed}.

\begin{lemma}\cite[Adapted from Lemma~14.1]{SSBD14}\label{lem:alg_fact}
Given an initial vector $\bs x_0 \in \bb R^d$, a set of directions $\{\bs v_t\}_{t=0}^T \subset \bb R^d$, step sizes $\{\del_t\}_{t=0}^T \subset \bb R_+$, and the relation
\begin{equation}\label{eq:alg_fact}
    \bs x_{t+1} = \bs x_t - \del_t \bs v_t, \quad t=0, \ldots, T, 
\end{equation}
the following holds for any $\bs x_{\ast}\in \R^d$, 
\begin{equation*}
\ts \sum_{t=0}^T \del_t \scp{\bs x_t-\bs x_{\ast}}{\bs v_t}\ \leq\ \frac{1}{2}\eu{\bs x_{\ast}-\bs x_0}^2 +\frac{1}{2} \sum_{t=0}^T \del_t^2 \eu{\bs v_t}^2.  
\end{equation*}
\end{lemma}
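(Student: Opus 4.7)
The plan is a standard telescoping argument built on tracking how the squared distance $\eu{\bs x_t - \bs x_\ast}^2$ evolves along the iterates. The key observation is that the recursion \eqref{eq:alg_fact} implies the one-step identity
\begin{equation*}
\eu{\bs x_{t+1} - \bs x_\ast}^2 = \eu{(\bs x_t - \bs x_\ast) - \del_t \bs v_t}^2 = \eu{\bs x_t - \bs x_\ast}^2 - 2\del_t \scp{\bs x_t - \bs x_\ast}{\bs v_t} + \del_t^2 \eu{\bs v_t}^2,
\end{equation*}
which is nothing more than the expansion of a squared Euclidean norm.

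The next step is to rearrange this identity so that $2\del_t \scp{\bs x_t - \bs x_\ast}{\bs v_t}$ sits alone on one side, with $\eu{\bs x_t - \bs x_\ast}^2 - \eu{\bs x_{t+1} - \bs x_\ast}^2 + \del_t^2 \eu{\bs v_t}^2$ on the other. Summing this over $t = 0, \ldots, T$ then telescopes the squared-distance contributions, leaving only $\eu{\bs x_0 - \bs x_\ast}^2 - \eu{\bs x_{T+1} - \bs x_\ast}^2$ together with $\sum_{t=0}^T \del_t^2 \eu{\bs v_t}^2$ on the right-hand side.

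To finish, I would drop the non-positive contribution $-\eu{\bs x_{T+1} - \bs x_\ast}^2$ (which only strengthens the bound) and divide by $2$ to match the stated constants. There is no real obstacle to overcome: the lemma uses no structure on the $\bs v_t$ beyond their being vectors in $\R^d$, and its entire content is the polarization-style identity $2\scp{\bs a}{\bs b} = \eu{\bs a}^2 + \eu{\bs b}^2 - \eu{\bs a - \bs b}^2$, applied with $\bs a = \bs x_t - \bs x_\ast$ and $\bs b = \del_t \bs v_t$. The order of $\bs x_\ast$ and $\bs x_0$ in the final bound is irrelevant inside a squared norm, so matching the exact form of the statement requires no extra care.
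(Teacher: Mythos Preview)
Your argument is correct and is exactly the standard telescoping proof of this inequality. The paper itself does not prove the lemma at all---it simply cites \cite[Lemma~14.1]{SSBD14}---so there is nothing further to compare.
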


The next elements provide some useful facts about the mean of independent random vectors with bounded variances.
We say that a random vector $\bs u$ has variance bounded by $\sigma^2$ if 
$\E \big[ \eu{\bs u - \E \bs u}^2\big]\leq \sigma^2$.

\begin{lemma}\label{lem:var_decrease}
Assume that $\bs u_1,\ldots, \bs u_N\in \R^d$ are independent vectors, where $\bs u_i$ has variance bounded by $\sigma_i^2$ for $i=1,\ldots, N$. Then their mean $\bar{\bs u}:=\tfrac{1}{N}\sum_{i=1}^N \bs u_i$ has variance bounded by $\sigma^2:=\tfrac{1}{N^2}\sum_{i=1}^N\sigma_i^2$. 
\end{lemma}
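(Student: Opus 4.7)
The plan is to reduce the statement to the standard fact that variances of independent, centered random vectors add, then divide by $N^2$. First I would center each random vector by defining $\bs v_i := \bs u_i - \E \bs u_i$, so that $\E \bs v_i = \bs 0$ and, by assumption, $\E \eu{\bs v_i}^2 \leq \sigma_i^2$. By linearity of expectation, $\bar{\bs u} - \E \bar{\bs u} = \tfrac{1}{N}\sum_{i=1}^N \bs v_i$, which immediately gives
\begin{equation*}
\E \eu{\bar{\bs u} - \E \bar{\bs u}}^2 = \tfrac{1}{N^2}\,\E \Big\|\sum_{i=1}^N \bs v_i\Big\|_2^2.
\end{equation*}

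Next I would expand the squared norm using the inner product,
\begin{equation*}
\E \Big\|\sum_{i=1}^N \bs v_i\Big\|_2^2 = \sum_{i=1}^N \E \eu{\bs v_i}^2 + \sum_{i \neq j} \E \scp{\bs v_i}{\bs v_j},
\end{equation*}
and then observe that the cross terms vanish: since $\bs u_i$ and $\bs u_j$ are independent for $i\neq j$, so are $\bs v_i$ and $\bs v_j$, and each is centered, hence $\E \scp{\bs v_i}{\bs v_j} = \scp{\E \bs v_i}{\E \bs v_j} = 0$. Combining these two displays gives
\begin{equation*}
\E \eu{\bar{\bs u} - \E \bar{\bs u}}^2 = \tfrac{1}{N^2}\sum_{i=1}^N \E \eu{\bs v_i}^2 \leq \tfrac{1}{N^2}\sum_{i=1}^N \sigma_i^2 = \sigma^2,
\end{equation*}
which is the claim.

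There is no real obstacle here; the only mild subtlety is justifying the vanishing of the cross-term inner products for vector-valued random variables, which is handled by expanding $\scp{\bs v_i}{\bs v_j} = \sum_k (\bs v_i)_k (\bs v_j)_k$ componentwise and using scalar independence coordinate by coordinate. Everything else is a routine computation.
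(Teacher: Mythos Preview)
Your proof is correct and follows essentially the same approach as the paper: center the vectors, expand the squared norm of the sum, use independence to kill the cross terms, and bound the diagonal terms by the $\sigma_i^2$. The only difference is cosmetic (your $\bs v_i$ is the paper's $\bs y_i$) and that you spell out the coordinatewise justification for $\E\scp{\bs v_i}{\bs v_j}=0$, which the paper leaves implicit.
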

\begin{proof} Set $\bs y_i=\bs u_i-\E \bs u_i$. Then
$$
\ts \E \big[ \|\bar{\bs u} - \E \bar{\bs u}\|_2^2\big]=
\E \Big[ \big\|\tfrac{1}{N}\sum_{i=1}^N \bs y_i\big\|_2^2\Big]=\tfrac{1}{N^2}\E \big[ \big\|\sum_{i=1}^N \bs y_i\big\|_2^2\big]
$$
and 
$$
\ts \E \big[ \big\|\sum_{i=1}^N \bs y_i\big\|_2^2\big] =  \sum_{i,j=1}^N \E \scp{\bs y_i}{\bs y_j} =  \sum_{i=1}^N \E \eu{\bs y_i}^2\leq \sum_{i=1}^N \sigma_i^2.
$$
\end{proof}
\begin{corollary}\label{coro:var_decrease}
Assume that $\bs u_1,\ldots, \bs u_N\in \R^d$ are independent vectors each with variance bounded by $\sigma^2$. Then their mean $\bar{\bs u}:=\tfrac{1}{N}\sum_{i=1}^N \bs u_i$ has variance bounded by $\sigma^2/N$. 
\end{corollary}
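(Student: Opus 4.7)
The statement is an immediate specialization of Lemma~\ref{lem:var_decrease}, so the plan is essentially a one-line reduction: apply that lemma with $\sigma_i = \sigma$ for every $i \in \{1,\dots,N\}$. The hypothesis that each $\bs u_i$ has variance bounded by $\sigma^2$ matches the hypothesis of the lemma with $\sigma_i^2 = \sigma^2$.

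Then the bound $\tfrac{1}{N^2}\sum_{i=1}^N \sigma_i^2$ furnished by Lemma~\ref{lem:var_decrease} simplifies to $\tfrac{1}{N^2}\cdot N\sigma^2 = \tfrac{\sigma^2}{N}$, which is exactly the claimed bound for $\bar{\bs u} = \tfrac{1}{N}\sum_{i=1}^N \bs u_i$. There is no obstacle; the only thing to check is that independence is preserved (which is assumed) and that the variance definition $\E[\|\bs u - \E \bs u\|_2^2] \leq \sigma^2$ used in the statement agrees with the one used in Lemma~\ref{lem:var_decrease}, which it does by inspection. Thus the proof reduces to a single substitution step and requires no further computation beyond noting that the sum of $N$ equal terms divided by $N^2$ equals the common term divided by $N$.
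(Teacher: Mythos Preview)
Your proposal is correct and matches the paper's intended argument: the paper states this result as an immediate corollary of Lemma~\ref{lem:var_decrease} without giving a separate proof, and your one-line specialization $\sigma_i=\sigma$ is precisely the reduction being invoked.
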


The next three lemmas show that we can characterize analytically the expectation of a few key functions of our one-bit quantizer (such as its distance to its unquantized input, or its scalar product with another vector) relatively to their random dithering.

\begin{lemma}\label{lem:K_bit_quantizer} For $\lambda>0, K\in \N$, let $\dithQ_{\lambda, K}:\R^d \to \R^d$ be a random quantizer according to Def.~\ref{def:K_bit_quantizer}. 
\begin{enumerate}
\item[1)]
For any $\bs x\in \R^d$, 
\begin{equation}\label{eq:lem:K_bit_quantizer:statement1}
\E \big[\dithQ_{\lambda,K}(\bs x)\big] = \big(x_k 1_{x_k\in [-\lambda,\lambda]} +\lambda 1_{x_k>\lambda} + (-\lambda)1_{x_k<-\lambda}\big)_{k=1}^d.
\end{equation}
In particular, if $\|\bs x\|_\infty\leq \lambda$, then 
$\E \big[\dithQ_{\lambda,K}(\bs x)\big]=x$. 
\item[2)]
For any $\bs x\in \R^d$,
\begin{equation*}
\bs x - \E \big[\dithQ_{\lambda, K}(\bs x)\big] = \big(\sgn(x_k)(|x_k|-\lambda)1_{|x_k|>\lambda}\big)_{k=1}^d.
\end{equation*}
The expression on the right hand side is the soft-thresholding of $\bs x$ with thresholding $\lambda$.
\item[3)]
For any $\bs x\in \R^d$ with $\|\bs x\|_\infty\leq \lambda$,
\begin{equation*}
\E \big[\eu{\dithQ_{\lambda, K}(\bs x) - \bs x}^2\big] =  \tfrac{1}{K}(\lambda^2 d - \eu{\bs x}^2).
\end{equation*}
\end{enumerate}
\end{lemma}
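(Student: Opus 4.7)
The plan is to reduce everything to a componentwise (scalar) calculation. Both the sign function and the dither vectors act entry-by-entry, so each coordinate of $\dithQ_{\lambda, K}(\bs x)$ depends only on the corresponding coordinate of $\bs x$ and of the $\bs \tau_i$'s. Moreover, all $K$ terms in the average are i.i.d., so the expectation reduces to a single computation, namely $\E[\lambda \sgn(x_k + \tau)]$ with $\tau \sim \cl U([-\lambda, \lambda])$.

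For part 1), I compute this scalar expectation directly by case analysis. Fix a coordinate $k$ and write $x = x_k$. The sign of $x + \tau$ is $+1$ precisely when $\tau > -x$. If $|x| \leq \lambda$, this event has probability $(\lambda + x)/(2\lambda)$, so $\E[\sgn(x+\tau)] = x/\lambda$, giving $\E[\lambda \sgn(x + \tau)] = x$. If $x > \lambda$, the sign is identically $+1$, so the expectation is $\lambda$; symmetrically, if $x < -\lambda$, it equals $-\lambda$. Averaging $K$ i.i.d.\ copies preserves this mean, which is exactly \eqref{eq:lem:K_bit_quantizer:statement1}. Part 2) is then a routine algebraic rewrite: subtracting the expression from part 1) coordinatewise gives $0$ when $|x_k| \leq \lambda$ and $x_k - \sgn(x_k)\lambda = \sgn(x_k)(|x_k| - \lambda)$ when $|x_k| > \lambda$, which is the claimed soft-thresholding.

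For part 3), I exploit that under $\|\bs x\|_\infty \leq \lambda$, part 1) tells us $\E[\dithQ_{\lambda,K}(\bs x)] = \bs x$, so the quantity we want is exactly the coordinatewise variance summed over $k$. Since $\dithQ_{\lambda,K}(\bs x)_k = \tfrac{\lambda}{K}\sum_{i=1}^K \sgn(x_k + (\bs \tau_i)_k)$ is an average of $K$ i.i.d.\ terms, its variance is
\begin{equation*}
\operatorname{Var}\bigl(\dithQ_{\lambda,K}(\bs x)_k\bigr) = \tfrac{\lambda^2}{K}\operatorname{Var}\bigl(\sgn(x_k + \tau)\bigr).
\end{equation*}
Because $\sgn(x_k + \tau)^2 = 1$ almost surely and $\E[\sgn(x_k + \tau)] = x_k/\lambda$ from the scalar computation above, this variance equals $1 - x_k^2/\lambda^2$. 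Summing $\tfrac{1}{K}(\lambda^2 - x_k^2)$ over $k$ produces $\tfrac{1}{K}(\lambda^2 d - \eu{\bs x}^2)$, as claimed.

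I do not expect any serious obstacle here: the only subtle point is remembering that $K$ does not appear in the mean (part 1) but does scale the variance (part 3), and keeping careful track of the case distinction $|x_k| \lessgtr \lambda$ in parts 1) and 2). Everything else is a direct one-dimensional calculation, applied independently in each of the $d$ coordinates and combined by independence of the dithers across coordinates and across the $K$ copies.
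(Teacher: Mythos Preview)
Your proposal is correct and follows essentially the same approach as the paper: a coordinatewise scalar computation for part~1), an immediate algebraic consequence for part~2), and for part~3) the reduction to a single dither via independence of the $K$ copies together with the identity $\E[\sgn(x_k+\tau)^2]=1$. The paper's version of part~3) expands the cross terms $\E[\langle \lambda\sgn(\bs x+\bs\tau_k)-\bs x,\lambda\sgn(\bs x+\bs\tau_l)-\bs x\rangle]$ explicitly rather than invoking the variance-of-an-average formula, but this is only a cosmetic difference.
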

\begin{proof}
Statement~$2$ immediately follows from statement~$1$. To show statement~$1$, first observe that $\E \big[\dithQ_{\lambda,K}(\bs x)\big]=\E \big[\dithQ_{\lambda}(\bs x)\big]$, where $\dithQ_{\lambda}(\bs x)=\lambda \sgn(\bs x+ \bs \tau)$ with $\bs \tau\sim\mathcal{U}([-\lambda, \lambda]^d)$ is a random one-bit quantizer according to Def.~\ref{def:K_bit_quantizer}. We show \eqref{eq:lem:K_bit_quantizer:statement1} coordinate-wise. 
Let $i\in [d]$. If $x_i>\lambda$, then $x_i+\tau_i>0$ which implies
$\big(\dithQ_{\lambda}(\bs x)\big)_i=\lambda \sgn(x_i+\tau_i)=\lambda$. Therefore $\E\big[\big(\dithQ_{\lambda}(\bs x)\big)_i\big]=\lambda$ in this case. Analogously, we have 
$\E\big[\big(\dithQ_{\lambda}(\bs x)\big)_i\big]=-\lambda$ if $x_i<-\lambda$. Next, let $x_i\in[-\lambda, \lambda]$. Then 
\begin{align*}
\E\big[\big(\dithQ_{\lambda}(\bs x)\big)_i\big] = \E\big[\lambda\sgn(x_i+\tau_i)\big]&= \lambda\cdot\Pb(x_i+\tau_i\geq 0) - \lambda\cdot\Pb(x_i+\tau_i< 0)\\
&= \lambda\cdot\Pb(\tau_i\geq - x_i) - \lambda\cdot\Pb(\tau_i< - x_i)\\
&= \lambda\cdot \big(\tfrac{\lambda + x_i}{2\lambda}\big) - \lambda\cdot\big(\tfrac{- x_i+\lambda}{2\lambda}\big)\\
&= x_i.
\end{align*}
This shows \eqref{eq:lem:K_bit_quantizer:statement1}. To show 
statement~$3$, let 
\begin{equation*}
\dithQ_{\lambda, K}(\bs x)=\tfrac{\lambda}{K}\sum_{i=1}^K\sgn(\bs x+\bs \tau_i)
\end{equation*}
for independent and uniformly distributed random vectors $\bs \tau_1, \ldots, \bs \tau_K\in [-\lambda, \lambda]^d$. 
Using $\dithQ_{\lambda, K}(\bs x) - \bs x=\tfrac{1}{K}\sum_{k=1}^K(\lambda\sgn(\bs x+\bs \tau_k)-\bs x)$, we obtain 
\begin{equation*}
\E \big[\eu{\dithQ_{\lambda, K}(\bs x) - \bs x}^2\big] = \tfrac{1}{K^2}\sum_{k, l=1}^K\E \big[\langle \lambda\sgn(\bs x+\bs \tau_k)-\bs x, \lambda\sgn(\bs x+\bs \tau_l)-\bs x\rangle\big].
\end{equation*}
By the independence of $\bs \tau_1, \ldots, \bs \tau_K$ and equation \eqref{eq:lem:K_bit_quantizer:statement1}, we have
\begin{equation*} 
\E \big[\langle \lambda\sgn(\bs x+\bs \tau_k)-\bs x, \lambda\sgn(\bs x+\bs \tau_l)-\bs x\rangle\big] =0
\end{equation*}
for all $k\neq l$. Therefore, 
\begin{equation}\label{eq:lem:K_bit_quantizer:1}
\E \big[\eu{\dithQ_{\lambda, K}(\bs x) - \bs x}^2\big] = \tfrac{1}{K^2}\sum_{k=1}^K\E \big[\big\|\lambda\sgn(\bs x+\bs \tau_k)-\bs x\big\|_2^2\big].
\end{equation}
Let $k\in [K]$ and let $\tau_{k,i}$ denote the $i$-th entry of $\bs \tau_k$. We have 
\begin{align}\label{eq:lem:K_bit_quantizer:2}
\E \big[\big\|\lambda\sgn(\bs x+\bs \tau_k)-\bs x\big\|_2^2\big] &=\ts 
\sum_{i=1}^d \E  \big(\lambda\sgn(x_i+\tau_{k,i})-x_i\big)^2\\ \nonumber
&=\ts \sum_{i=1}^d \big(\lambda^2 -2 x_i \E[\lambda\sgn( x_i+\tau_{k,i})] + x_i^2\big)\\ \nonumber
&=d\lambda^2 - \eu{\bs x}^2,
\end{align}
where we have used that $\E[\lambda\sgn(x_i+\tau_{k,i})]=\bs x_i$ by \eqref{eq:lem:K_bit_quantizer:statement1}. The result now follows from \eqref{eq:lem:K_bit_quantizer:1}
and \eqref{eq:lem:K_bit_quantizer:2}.
\end{proof}

\begin{lemma}\label{lem:help}
For $\lambda>0$, let $\dithQ_{\lambda}:\R^d \to \R^d$ be a random one-bit quantizer according to Def.~\ref{def:K_bit_quantizer}. For any $\bs x\in \R^d$ and any $\bs y\in \R^d$ with $\|\bs y\|_{\infty}\leq \lambda$,  
\begin{equation*}
\ts \E \big[\scp{\bs x}{\dithQ_\lambda(\bs y)}^2\big]=\scp{\bs x}{\bs y}^2 + \eu{\bs x}^2\lambda^2 - \sum_{i=1}^d x_i^2y_i^2.
\end{equation*}
\end{lemma}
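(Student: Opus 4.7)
The plan is to expand the inner product as a sum, square it to obtain a double sum, and then exploit the coordinate-wise independence of the dithering vector $\bs \tau$ to split the expectation. Write $\bs S := \dithQ_\lambda(\bs y) = \lambda \sgn(\bs y + \bs \tau)$ with components $S_i = \lambda \sgn(y_i + \tau_i)$. Then
\begin{equation*}
\ts \scp{\bs x}{\dithQ_\lambda(\bs y)}^2 = \Big(\sum_{i=1}^d x_i S_i\Big)^2 = \sum_{i,j=1}^d x_i x_j S_i S_j.
\end{equation*}

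The key observations I need are: (i) since the coordinates $\tau_1, \ldots, \tau_d$ of $\bs \tau$ are independent, so are $S_1, \ldots, S_d$; (ii) for $|y_i|\le \lambda$, Lemma~\ref{lem:K_bit_quantizer}.1 (applied with $K=1$) yields $\E[S_i] = y_i$; and (iii) because $S_i \in \{-\lambda, \lambda\}$, we have $S_i^2 = \lambda^2$ deterministically. Splitting the double sum into its diagonal and off-diagonal parts and using independence on the off-diagonal gives
\begin{equation*}
\ts \E\scp{\bs x}{\dithQ_\lambda(\bs y)}^2 = \sum_{i\ne j} x_i x_j \,\E[S_i]\,\E[S_j] + \sum_{i=1}^d x_i^2 \,\E[S_i^2] = \sum_{i\ne j} x_i x_j y_i y_j + \lambda^2 \eu{\bs x}^2.
\end{equation*}

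Finally, I rewrite the off-diagonal sum by completing the full double sum and subtracting the diagonal:
\begin{equation*}
\ts \sum_{i\ne j} x_i x_j y_i y_j = \Big(\sum_i x_i y_i\Big)\Big(\sum_j x_j y_j\Big) - \sum_i x_i^2 y_i^2 = \scp{\bs x}{\bs y}^2 - \sum_{i=1}^d x_i^2 y_i^2,
\end{equation*}
which together with the previous display gives the claimed identity. There is no real obstacle here; the only point requiring care is to make sure that the hypothesis $\|\bs y\|_\infty \le \lambda$ is used exactly once, namely to apply Lemma~\ref{lem:K_bit_quantizer}.1 so that $\E[S_i] = y_i$ rather than the clipped value, while the identity $S_i^2 = \lambda^2$ holds unconditionally.
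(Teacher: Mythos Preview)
Your proof is correct and follows essentially the same approach as the paper: expand the squared inner product into a double sum, use coordinate-wise independence together with Lemma~\ref{lem:K_bit_quantizer} to evaluate the off-diagonal terms as $x_i x_j y_i y_j$, use $S_i^2=\lambda^2$ on the diagonal, and then complete the sum to recover $\scp{\bs x}{\bs y}^2$. The paper's presentation is a bit terser but the argument is identical.
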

\begin{proof} 
Let $\dithQ_{\lambda}(\bs y)=\lambda\sgn(\bs y+\bs \tau)$ for $\bs \tau\sim \mathcal{U}([-\lambda, \lambda]^d)$. Using the independence of the coordinates $\tau_1,\ldots, \tau_d$ of $\bs \tau$ and Lemma~\ref{lem:K_bit_quantizer} we obtain  
\begin{align*}
\E \big[\scp{\bs x}{\dithQ_\lambda(\bs y)}^2\big]
&=\ts \sum_{i,j=1}^d x_i x_j \E\big[\lambda\sgn(y_i+\tau_i) \lambda\sgn(y_j+\tau_j)\big]\\
&=\ts \sum_{i\neq j} x_i x_j y_i y_j + \sum_{i=1}^d x_i^2\lambda^2\\
&=\ts \sum_{i, j=1}^d x_i x_j y_i y_j + \sum_{i=1}^d (x_i^2\lambda^2 - x_i^2y_i^2).
\end{align*}
\end{proof}

\begin{lemma}\label{lem:var}
For $\lambda>0$, let $\dithQ_{\lambda}:\R^d \to \R^d$ be a random one-bit quantizer according to Def.~\ref{def:K_bit_quantizer}.
For any matrix $\bs A\in \R^{k\times d}$ and any vector 
$\bs x\in \R^d$ with $\|\bs x\|_{\infty}\leq \lambda$,  
\begin{equation*}
\ts \E \big[\eu{A\dithQ_\lambda(\bs x) - \bs A\bs x}^2\big]= \|\bs A\|_F^2\lambda^2 - \sum_{i=1}^k\sum_{j=1}^d A_{i,j}^2 x_j^2.
\end{equation*}
\end{lemma}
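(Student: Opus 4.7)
The plan is to reduce this statement directly to Lemma~\ref{lem:help} by decomposing the squared norm row by row. Writing $\bs a_i^\top$ for the $i$-th row of $\bs A$, I would first expand
\[
\eu{\bs A \dithQ_\lambda(\bs x) - \bs A \bs x}^2 = \sum_{i=1}^k \big(\scp{\bs a_i}{\dithQ_\lambda(\bs x)} - \scp{\bs a_i}{\bs x}\big)^2,
\]
and then take expectation, so that the problem is reduced to computing, for each $i$, the variance $\E[\scp{\bs a_i}{\dithQ_\lambda(\bs x)}^2] - \scp{\bs a_i}{\bs x}^2$. The step that makes this valid is that $\|\bs x\|_\infty \leq \lambda$ implies $\E[\dithQ_\lambda(\bs x)] = \bs x$ (statement~1 of Lemma~\ref{lem:K_bit_quantizer}), so $\E[\scp{\bs a_i}{\dithQ_\lambda(\bs x)}] = \scp{\bs a_i}{\bs x}$ and each centered square is a genuine variance term.

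Next I would plug in Lemma~\ref{lem:help}, which, applied with $\bs x \leftarrow \bs a_i$ and $\bs y \leftarrow \bs x$, gives
\[
\E\big[\scp{\bs a_i}{\dithQ_\lambda(\bs x)}^2\big] = \scp{\bs a_i}{\bs x}^2 + \eu{\bs a_i}^2 \lambda^2 - \sum_{j=1}^d a_{i,j}^2 x_j^2.
\]
Subtracting $\scp{\bs a_i}{\bs x}^2$, summing over $i \in [k]$, and using $\sum_{i=1}^k \eu{\bs a_i}^2 = \|\bs A\|_F^2$ yields the announced identity. No further estimates or concentration arguments are needed; the calculation is exact.

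There is really no main obstacle: the statement is a matrix-valued repackaging of Lemma~\ref{lem:help}, and the hypothesis $\|\bs x\|_\infty \leq \lambda$ is used only to ensure unbiasedness of the quantizer so that the cross terms $-2 \scp{\bs a_i}{\bs x} \E[\scp{\bs a_i}{\dithQ_\lambda(\bs x)}]$ collapse to $-2\scp{\bs a_i}{\bs x}^2$. The one thing to double-check when writing out the details is simply that Lemma~\ref{lem:help} is stated for a generic vector $\bs x$ with $\|\bs y\|_\infty\leq \lambda$, so the role of the variables has to be matched carefully before summing over the rows of $\bs A$.
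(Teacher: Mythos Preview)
Your proposal is correct and follows essentially the same approach as the paper: both arguments use the unbiasedness $\E[\dithQ_\lambda(\bs x)]=\bs x$ from Lemma~\ref{lem:K_bit_quantizer} to reduce the left-hand side to $\E[\eu{\bs A\dithQ_\lambda(\bs x)}^2]-\eu{\bs A\bs x}^2$, and then expand row by row via Lemma~\ref{lem:help}. The only cosmetic difference is that the paper applies the variance identity to the full norm before splitting into rows, whereas you split into rows first and then apply the variance identity coordinatewise.
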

\begin{proof} Let $\bs a^{(i)}$ denote the $i$-th row of $\bs A$. Since $\|\bs x\|_\infty\leq \lambda$, Lemma~\ref{lem:K_bit_quantizer} implies  
$\E[\dithQ_\lambda(\bs x)]=x$. Therefore, 
\begin{multline}\label{eq:lem:var:1}
\E \big[\eu{\bs A\dithQ_\lambda(\bs x) - \bs A\bs x}^2\big]
=\E \big[\eu{\bs A\dithQ_\lambda(\bs x)}^2\big] + \eu{\bs A\bs x}^2
 - 2 \scp{\bs A\, \E[\dithQ_\lambda(\bs x)]}{\bs A\bs x}\\
=\E \big[\eu{\bs A\dithQ_\lambda(\bs x)}^2\big] - \eu{\bs A\bs x}^2. 
\end{multline}
Lemma~\ref{lem:help} implies 
\begin{align}\label{eq:lem:var:2}
\E \big[\eu{\bs A\dithQ_\lambda(\bs x)}^2\big] &\ts = \sum_{i=1}^k \E \big[\scp{\bs a^{(i)}}{\dithQ_\lambda(\bs x)}^2\big]\\ \nonumber
&\ts =\sum_{i=1}^k \Big(\scp{\bs a^{(i)}}{\bs x}^2 + \eu{\bs a^{(i)}}^2\lambda^2 - \sum_{j=1}^d A_{i,j}^2 x_j^2\Big)\\ \nonumber
&\ts = \eu{\bs A \bs x}^2 + \|\bs A\|_F^2\lambda^2 - \sum_{i=1}^k\sum_{j=1}^d A_{i,j}^2 x_j^2.
\end{align}
The result now follows from \eqref{eq:lem:var:1} and \eqref{eq:lem:var:2}.
\end{proof}

The following result is well-known in the case where $\bs \Htransform$ is an orthonormal Hadamard matrix (\eg see \cite[Lemma B.2]{Oym16}). Since our version of the result is slightly more general, we included a proof below. 

\begin{lemma}\label{lem:Hadamard_general}
Let $\bs \eps\in \{-1,1\}^d$ be a uniformly distributed random vector and $\bs \Htransform\in \R^{d\times d}$
a matrix with $|\Htransform_{i,j}|\leq \gamma$ for all $i,j\in [d]$. 
For any $\bs x\in \R^d$ and $t\geq 0$,  
\begin{equation*}
\|\bs H\bs D_{\bs \eps} \bs x\|_\infty\leq t \eu{\bs x}
\end{equation*}
with probability at least $1-2d\exp(-t^2/2\gamma^2)$.
\end{lemma}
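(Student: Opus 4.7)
The plan is to prove this by a standard combination of Hoeffding's inequality applied coordinatewise together with a union bound over the $d$ coordinates of $\bs H \bs D_{\bs \eps} \bs x$.

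First I would fix a coordinate $i \in [d]$ and observe that the $i$-th entry of $\bs H \bs D_{\bs \eps} \bs x$ is the linear combination $(\bs H \bs D_{\bs \eps} \bs x)_i = \sum_{j=1}^d H_{i,j}\, \eps_j\, x_j$. This is a sum of independent mean-zero random variables since each $\eps_j$ is a Rademacher variable. The $j$-th summand is almost surely bounded in absolute value by $|H_{i,j} x_j| \leq \gamma |x_j|$, so Hoeffding's inequality yields, for every $s \geq 0$,
\begin{equation*}
\Pb\Big(\,\big|\ts\sum_{j=1}^d H_{i,j}\, \eps_j\, x_j\big| > s\,\Big)\ \leq\ 2\exp\!\Big(-\tfrac{s^2}{2 \sum_{j=1}^d H_{i,j}^2 x_j^2}\Big)\ \leq\ 2\exp\!\Big(-\tfrac{s^2}{2 \gamma^2 \|\bs x\|_2^2}\Big),
\end{equation*}
where the last inequality uses the uniform bound $|H_{i,j}| \leq \gamma$ to obtain $\sum_j H_{i,j}^2 x_j^2 \leq \gamma^2 \|\bs x\|_2^2$.

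Next I would set $s = t \|\bs x\|_2$, which converts the tail bound into $\Pb\big(|(\bs H\bs D_{\bs \eps}\bs x)_i| > t\|\bs x\|_2\big) \leq 2\exp(-t^2/(2\gamma^2))$. A union bound over $i\in[d]$ then gives
\begin{equation*}
\Pb\big(\|\bs H\bs D_{\bs \eps}\bs x\|_\infty > t\|\bs x\|_2\big)\ \leq\ \sum_{i=1}^d \Pb\big(|(\bs H\bs D_{\bs \eps}\bs x)_i| > t\|\bs x\|_2\big)\ \leq\ 2d\exp(-t^2/(2\gamma^2)),
\end{equation*}
which is exactly the claimed conclusion (after taking the complementary event).

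There is no real obstacle here: the only substantive step is Hoeffding's inequality for Rademacher sums, and the rest is just the union bound and the pointwise estimate $|H_{i,j}|\leq\gamma$. One should note that Lemma~\ref{lem:Hadamard} is recovered as the special case $\gamma = 1/\sqrt d$ (in which case $\sum_j H_{i,j}^2 = 1$ when $\bs H$ is a universal basis) and $t = \alpha \sqrt{\log d / d}$, making the tail bound $2d\exp(-\alpha^2 \log d / 2) = 2\exp(-(\tfrac{\alpha^2}{2}-1)\log d)$, which is at most $2\exp(-\tfrac{1}{4}\alpha^2 \log d)$ for $\alpha \geq 2$, consistent with the quoted statement.
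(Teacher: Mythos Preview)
Your proposal is correct and follows essentially the same approach as the paper: the paper also writes $(\bs H\bs D_{\bs \eps}\bs x)_i$ as a Rademacher sum (phrased as $\scp{\tilde{\bs x}_i}{\bs \eps}$ with $\tilde{\bs x}_i=\bs D_{\bs x}\bs h^{(i)}{}^\top$), applies Hoeffding's inequality for Rademacher sums together with $\eu{\tilde{\bs x}_i}\leq\gamma\eu{\bs x}$, and finishes with a union bound over the $d$ coordinates. Your added remark recovering Lemma~\ref{lem:Hadamard} as the case $\gamma=1/\sqrt d$ is exactly how the paper derives it as well.
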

\begin{proof}
We may assume $\eu{\bs x}=1$.
Let $\bs h^{(i)}$ denote the $i$-th row of $\bs \Htransform$ and set $\tilde{\bs x}_i=\bs D_{\bs x} \bs h^{(i)}{}^\top$. 
By Hoeffding's inequality for Rademacher sums (see, \eg \cite[Corollary 7.21.]{foucart2013cs}), 
\begin{equation*}
\Pb(|\scp{\tilde{\bs x}_i}{\bs \eps}|\geq t)\leq 2\exp(-t^2/2\eu{\tilde{\bs x}_i}^2)
\end{equation*}
for all $t> 0$.
Since  
$\eu{\tilde{\bs x}_i}\leq \gamma\eu{\bs x}$ and 
$(\bs H\bs D_{\bs \eps} \bs x)_i=\scp{\tilde{\bs x}_i}{\bs \eps}$, this implies for any $t>0$, 
\begin{equation*}
\Pb(|(\bs H\bs D_{\bs \eps} \bs x)_i|\geq t)\leq 2\exp(-t^2/2\gamma^2).
\end{equation*}
Applying the union bound, we obtain
\begin{equation*}
\Pb(\|\bs H\bs D_{\bs \eps} \bs x\|_\infty\geq t)
\leq 2d\exp(-t^2/2\gamma^2),
\end{equation*}
which shows the result.
\end{proof}

\begin{proof}[Proof of Lemma~\ref{lem:Hadamard}]
It holds $\bs \Htransform_{\bs \eps}=\bs H \bs D_{\bs \eps}$ for $\bs \Htransform\in \R^{d\times d}$ a universal sensing basis and $\bs \eps\in \{-1,1\}^d$ a uniformly distributed random vector. Since $\bs \Htransform\in \R^{d\times d}$ is a universal sensing basis, it holds $|\bs \Htransform_{i,j}|=\tfrac{1}{\sqrt{d}}$ for all $i,j\in [d]$. Hence, Lemma~\ref{lem:Hadamard_general} implies that for every $t\geq 0$,
\begin{equation}
\Pb(\|\bs \Htransform_{\bs \eps} \bs x\|_\infty> t\eu{\bs x})\leq 2d\exp(-t^2d/2).
\end{equation}

Observe that 
\begin{align*}
d\exp(-\tfrac{1}{2}t^2d)&=\exp(\log d -\tfrac{1}{2}t^2d)
\leq \exp(-\tfrac{1}{4}t^2d)
\end{align*}
for all $t\geq 2\sqrt{\tfrac{\log d}{d}}$. Substituting $t=\alpha \sqrt{\tfrac{\log d}{d}}$, we obtain the result.
\end{proof}

{The next result leverages the previous corollary to characterize the deviation between the average of multiple independent $\lambda$-transforms, each applied to different vectors, and the mean of these vectors. This element is central to the proof of Prop.~\ref{prop:convex:quantized_SGD:distributed}.}

\begin{corollary}\label{lem:Hadamard:average}
There exists an absolute constant $C\geq 2$ such that the following holds.
Let $\Lfct_1,\ldots, \Lfct_N$ be independent copies of the random $\lambda$-transform 
$\Ztransform_{\lambda}:\R^d\to \R^d$ from Def.~\ref{def:K_bit_quantizer:flattened}.
For $B>0$ and $\alpha\geq C$, set 
\begin{equation*}
\lambda = \alpha B \sqrt{\tfrac{\log d}{d}}.
\end{equation*}
Then for any vectors $\bs x_1, \ldots, \bs x_N\in \R^d$ with $\max_{n\in [N]}\|\bs x_n\|_2\leq B$,
\begin{multline*}
\ts \E \Big[ \Big\|\tfrac{1}{N}\sum_{n=1}^N \Lfct_{n}(\bs x_n) - \tfrac{1}{N}\sum_{n=1}^N \bs x_n\Big\|_2^2\Big] \leq \tfrac{1}{N^2}\sum_{n=1}^N \E \big[\eu{\Lfct_{n}(\bs x_n) - \bs x_n}^2
\big]\\ 
\quad + 8N B^2 \cdot \exp(-\tfrac{1}{8}\alpha^2\log d). 
\end{multline*}
\end{corollary}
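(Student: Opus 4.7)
My plan is to split the target expectation into contributions from a good event, on which the dithered quantizer is unbiased, and its complement, which is tail-controlled. Define $\bs Y_n := \Lfct_n(\bs x_n) - \bs x_n$, the events $\mathcal{A}_n := \{\|\bs \Htransform_{\bs \eps_n} \bs x_n\|_\infty \leq \lambda\}$, and $\mathcal{A} := \bigcap_{n=1}^N \mathcal{A}_n$. Since $\|\bs x_n\|_2 \leq B$ and $\lambda = \alpha B\sqrt{\log d/d}$, Lemma~\ref{lem:Hadamard} together with a union bound give $\Pb(\mathcal{A}^C) \leq 2N \exp(-\tfrac{1}{4}\alpha^2 \log d)$.

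For the restriction to $\mathcal{A}$, I condition on $\bs \eps := (\bs \eps_1, \ldots, \bs \eps_N)$. Conditionally on $\bs \eps$, the vectors $\bs Y_1, \ldots, \bs Y_N$ are independent because they depend on independent dithers, so in the expansion of $\|\tfrac{1}{N}\sum_n \bs Y_n\|_2^2$ every off-diagonal term reduces to $\scp{\bs m_n}{\bs m_m}$ with $\bs m_n := \E_{\bs \tau_n}[\bs Y_n\mid \bs \eps_n]$. Lemma~\ref{lem:K_bit_quantizer}, part 1, then yields $\bs m_n = 0$ on $\mathcal{A}_n$, so on $\mathcal{A}$ only the diagonal survives and we obtain $\E[\mathbf 1_{\mathcal{A}}\|\tfrac{1}{N}\sum_{n=1}^N \bs Y_n\|_2^2] \leq \tfrac{1}{N^2}\sum_{n=1}^N \E\|\bs Y_n\|_2^2$, which matches the first term of the claim.

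On $\mathcal{A}^C$, I use the deterministic estimate $\|\bs Y_n\|_2 = \|\dithQ_\lambda(\bs \Htransform_{\bs \eps_n} \bs x_n) - \bs \Htransform_{\bs \eps_n} \bs x_n\|_2 \leq \lambda\sqrt{d} + B$, which is at most $2\alpha B\sqrt{\log d}$ once $C$ is chosen large enough to force $\lambda\sqrt{d} \geq B$. Jensen's inequality then gives $\|\tfrac{1}{N}\sum_n \bs Y_n\|_2^2 \leq 4\alpha^2 B^2 \log d$, and multiplying by $\Pb(\mathcal{A}^C)$ contributes at most $8N\alpha^2 B^2 \log d\cdot \exp(-\tfrac{1}{4}\alpha^2 \log d)$. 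The main, essentially technical, hurdle is absorbing the polynomial prefactor $\alpha^2 \log d$ into the exponential; this is arranged by enlarging $C$ so that $\alpha^2 \log d \leq \exp(\tfrac{1}{8}\alpha^2 \log d)$ for all $\alpha \geq C$ and $d \geq 2$, at which point the residual becomes the announced $8NB^2 \exp(-\tfrac{1}{8}\alpha^2 \log d)$.
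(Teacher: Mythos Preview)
Your proposal is correct and follows essentially the same approach as the paper's proof: the same good/bad event decomposition $\mathcal{A}=\bigcap_n\{\|\bs \Htransform_{\bs \eps_n}\bs x_n\|_\infty\le\lambda\}$, the same use of Lemma~\ref{lem:K_bit_quantizer} to kill the cross terms on $\mathcal{A}$, the same trivial bound $\|\bs Y_n\|_2\le\lambda\sqrt d+B\le 2\lambda\sqrt d$ on $\mathcal{A}^C$, and the same absorption of the $\alpha^2\log d$ prefactor into the exponential by enlarging $C$. The only cosmetic differences are that the paper writes the crude bound as $4N^2\lambda^2 d$ (before dividing by $N^2$) and phrases the absorption step via~\eqref{eq:std_calc}.
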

\begin{proof}
For $n\in [N]$, let  
\begin{equation*}
\Lfct_{n}(\bs x)=\bs \Htransform_{\bs \eps_n}^{-1} (\dithQ_{\lambda})_n(\bs \Htransform_{\bs \eps_n} \bs x)
\end{equation*}
and
\begin{equation*}
(\dithQ_{\lambda})_n(\bs x) = \lambda\sgn(\bs x+\bs \tau_n)
\end{equation*}
where $\bs \eps_1, \ldots, \bs \eps_N\in \{-1,1\}^d$ and $\bs \tau_1, \ldots, \bs \tau_N\in [-\lambda, \lambda]^d$ are independent and uniformly distributed random vectors. 
Set $\tilde{\bs x}_n=\bs \Htransform_{\bs \eps_n}\bs x_n$ for $n\in [N]$.
Then $\Lfct_{n}(\bs x_n) = \bs \Htransform_{\bs \eps_n}^{-1}  (\dithQ_{\lambda})_n(\tilde{\bs x}_n)$.
Define the events 
\begin{equation*}
\mathcal{A}_n := \{\|\tilde{\bs x}_n\|_\infty\leq \lambda\}, \qquad\mathcal{A}:= \cap_{n=1}^N \mathcal{A}_n.
\end{equation*}
{Define the sets} $\bs \tau := \{\bs \tau_1, \ldots, \bs \tau_N\}$ and $\bs \eps:=\{\bs \eps_1, \ldots, \bs \eps_N\}$. 
We have
\begin{multline*}
\ts \E \Big[ 1_{\mathcal{A}}\cdot \Big\|\sum_{n=1}^N \Lfct_{n}(\bs x_n) - \sum_{n=1}^N \bs x_n\Big\|_2^2\Big] = \E \Big[ 1_{\mathcal{A}}\cdot \Big\| \sum_{n=1}^N \bs \Htransform_{\bs \eps_n}^{-1}\big((\dithQ_{\lambda})_n(\tilde{\bs x}_n) - \tilde{\bs x}_n\big) \Big\|_2^2\Big]\\
\ts = \E_{\bs \eps}\Big[1_{\mathcal{A}}\cdot  
\sum_{n, m=1}^N \E_{\bs \tau}\big[\scp{\bs \Htransform_{\bs \eps_n}^{-1} \big((\dithQ_{\lambda})_n(\tilde{\bs x}_n) - \tilde{\bs x}_n\big)}{\bs \Htransform_{\bs \eps_m}^{-1} \big((\dithQ_{\lambda})_m(\tilde{\bs x}_m) - \tilde{\bs x}_m\big)}
\big]\Big].
\end{multline*}
Lemma~\ref{lem:K_bit_quantizer} implies that on the event $\mathcal{A}$ it holds $\E_{\bs \tau_n}\big[(\dithQ_{\lambda})_n(\tilde{\bs x}_n)\big]=\tilde{\bs x}_n$ for all $n=1, \ldots, N$. In combination with the independence of $\tau_1, \ldots, \tau_N$ this implies that for all $n\neq m$, 
\begin{equation*}
\E_{\bs \tau}\big[\scp{\bs \Htransform_{\bs \eps_n}^{-1} \big((\dithQ_{\lambda})_n(\tilde{\bs x}_n) - \tilde{\bs x}_n\big)}{\bs \Htransform_{\bs \eps_m}^{-1}\big((\dithQ_{\lambda})_m(\tilde{\bs x}_m) - \tilde{\bs x}_m\big)}
\big]\Big] = 0.
\end{equation*}
Therefore
\begin{multline*}
\ts \E \Big[ 1_{\mathcal{A}}\cdot \Big\|\sum_{n=1}^N \Lfct_{n}(\bs x_n) - \sum_{n=1}^N \bs x_n\Big\|_2^2\Big]\\
\ts = \E_{\bs \eps}\Big[1_{\mathcal{A}}\cdot  
\sum_{n=1}^N \E_{\bs \tau}  \big[\eu{\bs \Htransform_{\bs \eps_n}^{-1} \big((\dithQ_{\lambda})_n(\tilde{\bs x}_n) - \tilde{\bs x}_n\big)}^2
\big]\Big]\\
\ts \leq \sum_{n=1}^N \E \big[\eu{\Lfct_n(\bs x_n)-\bs x_n}^2
\big].
\end{multline*}
To estimate the expectation of $\big\|\sum_{n=1}^N \Lfct_{n}(\bs x_n) - \sum_{n=1}^N \bs x_n\big\|_2^2$ on $\mathcal{A}^C$, first notice that 
we have the trivial estimate 
\begin{align*}
\ts \Big\|\sum_{n=1}^N \Lfct_{n}(\bs x_n) - \sum_{n=1}^N \bs x_n\Big\|_2^2 
&\ts \leq 2 \Big\|\sum_{n=1}^N \Lfct_{n}(\bs x_n) \Big\|_2^2
+ 2 \Big\|\sum_{n=1}^N \bs x_n\Big\|_2^2\\
&\ts \leq 2N^2\lambda^2 d+2N^2B^2\\
&\ts \leq 4N^2\lambda^2 d,
\end{align*}
where we have used that $\lambda \sqrt{d}\geq B$ and $\eu{\Lfct_n(\bs x_n)}\leq \lambda\sqrt{d}$ for all $n\in [N]$.
Further, by the union bound and Cor.~\ref{lem:Hadamard}, 
\begin{align*}
\ts \Pb(\mathcal{A}^C)\leq\sum_{n=1}^N \Pb(\|\tilde{\bs x}_n\|_\infty >\lambda)
&\ts \leq 
\sum_{n=1}^N \Pb(\|\bs \Htransform_{\bs \eps_n}\bs x_n\|_\infty > \alpha\sqrt{\tfrac{\log d}{d}} \eu{\bs x_n})\\
&\ts \leq 2N\exp(-\tfrac{1}{4}\alpha^2\log d).
\end{align*}
Putting everything together, we obtain 
\begin{align*}
&\ts \E \big[\big\|\tfrac{1}{N}\sum_{n=1}^N \Lfct_{n}(\bs x_n) - \tfrac{1}{N}\sum_{n=1}^N \bs x_n\big\|_2^2\big]\\
&\ts =\E \big[ 1_{\mathcal{A}}\cdot \big\|\tfrac{1}{N}\sum_{n=1}^N \Lfct_{n}(\bs x_n) - \tfrac{1}{N}\sum_{n=1}^N \bs x_n\big\|_2^2\big]\\
&\ts \qquad\qquad + \E \big[ 1_{\mathcal{A}^C}\cdot \big\|\tfrac{1}{N}\sum_{n=1}^N \Lfct_{n}(\bs x_n) - \tfrac{1}{N}\sum_{n=1}^N \bs x_n\big\|_2^2\big]\\
&\ts \leq \tfrac{1}{N^2}\sum_{n=1}^N \E \big[\eu{\Lfct_n(\bs x_n)-\bs x_n}^2
\big]
 + 8\lambda^2 d N\exp(-\tfrac{1}{4}\alpha^2\log d).
\end{align*} 
Finally, observe that
\begin{align}\label{eq:std_calc}
\lambda^2 d \cdot \exp(-\tfrac{1}{4}\alpha^2\log d) &= B^2 \cdot (\alpha^2 \log d) \cdot \exp(-\tfrac{1}{4}\alpha^2\log d)\\ \nonumber
&\leq B^2 \cdot \exp(-\tfrac{1}{8}\alpha^2\log d),
\end{align}
for all $\alpha\geq C$, where $C>0$ denotes an absolute constant that is chosen large enough. This shows the result.
\end{proof}

{Finally, we use this lemma in the proof of Thm.~\ref{thm:Hadamard:all_compressed}.}

\begin{lemma}\label{lem:K:rad} 
There exists an absolute constant $C\geq 2$ such that the following holds.
For $\lambda>0, K\in \N$, let $\Ztransform_{\lambda, K}:\R^d \to \R^d$ be a $(\lambda, K)$-transform according to Def.~\ref{def:K_bit_quantizer:flattened}.
For $B>0$ and $\alpha\geq C$, set 
$\lambda = \alpha B \sqrt{\tfrac{\log d}{d}}$.
Then for any $\bs x\in \R^d$ with $\eu{\bs x}\leq B$,
\begin{equation*}
\E \big[\big\|\Ztransform_{\lambda, K}(\bs x) - \bs x\big\|_2^2\big] \leq \tfrac{\alpha^2 B^2\log d}{K}
 + 8B^2 \exp(-\tfrac{1}{8}\alpha^2 \log d).
\end{equation*}
\end{lemma}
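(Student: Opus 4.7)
The plan is to mirror the argument used for Corollary~\ref{lem:Hadamard:average}, but now for a single $(\lambda,K)$-transform rather than an average of $\lambda$-transforms. Concretely, I would condition on the ``good'' event $\mathcal{A}=\{\|\bs \Htransform_{\bs \eps}\bs x\|_\infty\leq \lambda\}$ that the flattened vector $\bs \Htransform_{\bs \eps}\bs x$ lies inside the dynamic range of the quantizer. On $\mathcal{A}$, Lemma~\ref{lem:K_bit_quantizer}(3) gives an \emph{exact} variance formula for the dithered quantizer; off $\mathcal{A}$ I will fall back on the uniform bound $\eu{\Ztransform_{\lambda,K}(\bs x)}\leq \lambda\sqrt{d}$ from Rem.~\ref{rem:simple}, controlling the probability of $\mathcal{A}^C$ via Lem.~\ref{lem:Hadamard}.

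First I would use orthogonality of $\bs \Htransform_{\bs \eps}$ to rewrite
\begin{equation*}
\ts \eu{\Ztransform_{\lambda,K}(\bs x) - \bs x}^2
= \eu{\dithQ_{\lambda,K}(\bs \Htransform_{\bs \eps}\bs x) - \bs \Htransform_{\bs \eps}\bs x}^2,
\end{equation*}
since $\bs \Htransform_{\bs \eps}^{-1}$ is an isometry and $\bs x = \bs \Htransform_{\bs \eps}^{-1}\bs \Htransform_{\bs \eps}\bs x$. Then, conditioning on $\bs \eps$ (so that $\bs \Htransform_{\bs \eps}\bs x$ is deterministic) and using Lem.~\ref{lem:K_bit_quantizer}(3), I get on the event $\mathcal{A}$,
\begin{equation*}
\ts \E_{\bs \tau}\big[\eu{\dithQ_{\lambda,K}(\bs \Htransform_{\bs \eps}\bs x) - \bs \Htransform_{\bs \eps}\bs x}^2\big]
= \tfrac{1}{K}\big(\lambda^2 d - \eu{\bs \Htransform_{\bs \eps}\bs x}^2\big)
= \tfrac{1}{K}\big(\lambda^2 d - \eu{\bs x}^2\big)
\leq \tfrac{\lambda^2 d}{K} = \tfrac{\alpha^2 B^2 \log d}{K},
\end{equation*}
where the next-to-last equality uses again that $\bs \Htransform_{\bs \eps}$ is orthogonal, and the plug-in is the definition of $\lambda$. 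Taking outer expectation over $\bs \eps$ therefore yields $\E[1_{\mathcal{A}}\cdot \eu{\Ztransform_{\lambda,K}(\bs x)-\bs x}^2]\leq \alpha^2 B^2 \log d / K$.

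Next, on $\mathcal{A}^C$, I invoke the trivial estimate $\eu{\Ztransform_{\lambda,K}(\bs x)-\bs x}^2 \leq 2\lambda^2 d + 2\eu{\bs x}^2\leq 4\lambda^2 d$ (using $\lambda\sqrt{d}\geq B\geq \eu{\bs x}$), and Lem.~\ref{lem:Hadamard} applied with $\alpha\geq C\geq 2$ to bound
\begin{equation*}
\ts \Pb(\mathcal{A}^C) \leq \Pb\big(\|\bs \Htransform_{\bs \eps}\bs x\|_\infty > \alpha\sqrt{\tfrac{\log d}{d}}\,\eu{\bs x}\big)\leq 2\exp(-\tfrac{1}{4}\alpha^2\log d).
\end{equation*}
Combining, $\E[1_{\mathcal{A}^C}\cdot \eu{\Ztransform_{\lambda,K}(\bs x)-\bs x}^2]\leq 8\lambda^2 d \exp(-\tfrac{1}{4}\alpha^2\log d)$.

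Finally, I close the argument with the same absorption estimate already used in \eqref{eq:std_calc}: for $\alpha\geq C$ with $C$ a sufficiently large absolute constant, one has $\lambda^2 d\,\exp(-\tfrac14\alpha^2\log d) = B^2\alpha^2\log d\,\exp(-\tfrac14\alpha^2\log d)\leq B^2\exp(-\tfrac18\alpha^2\log d)$. Summing the two contributions gives the claimed bound $\tfrac{\alpha^2 B^2\log d}{K} + 8B^2\exp(-\tfrac{1}{8}\alpha^2\log d)$. There is no real obstacle here; the only nuance is to carefully separate the randomness of $\bs \eps$ (used to define the event $\mathcal{A}$ and the orthogonal matrix $\bs \Htransform_{\bs \eps}$) from the dither randomness $\bs \tau_1,\ldots,\bs \tau_K$ when applying Lem.~\ref{lem:K_bit_quantizer}(3) conditionally.
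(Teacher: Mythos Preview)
Your proof is correct and follows the same overall skeleton as the paper's: condition on the event $\mathcal{A}=\{\|\bs \Htransform_{\bs \eps}\bs x\|_\infty\leq \lambda\}$, compute the conditional variance on $\mathcal{A}$, use the trivial bound $\eu{\Ztransform_{\lambda,K}(\bs x)-\bs x}\leq 2\lambda\sqrt{d}$ on $\mathcal{A}^C$, control $\Pb(\mathcal{A}^C)$ via Lemma~\ref{lem:Hadamard}, and finish with the absorption \eqref{eq:std_calc}.

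The one genuine difference is in how the good-event variance is computed. You exploit immediately that $\bs \Htransform_{\bs \eps}^{-1}$ is an isometry to reduce $\eu{\Ztransform_{\lambda,K}(\bs x)-\bs x}^2$ to $\eu{\dithQ_{\lambda,K}(\bs \Htransform_{\bs \eps}\bs x)-\bs \Htransform_{\bs \eps}\bs x}^2$ and then apply Lemma~\ref{lem:K_bit_quantizer}(3) directly, which already handles the $K$-averaged quantizer and gives the exact value $\tfrac{1}{K}(\lambda^2 d - \eu{\bs x}^2)$. The paper instead keeps $\bs V=\bs \Htransform_{\bs \eps}^{-1}$ inside the norm, expands the $K$-average by hand to reduce to the $K=1$ case, and then invokes the more general Lemma~\ref{lem:var} (valid for arbitrary $\bs A$), finally using $\|\bs V\|_F^2=d$. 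Both routes land on the same bound $\lambda^2 d/K$, but yours is shorter and avoids the detour through Lemma~\ref{lem:var}; the paper's route would in principle apply if $\bs \Htransform_{\bs \eps}^{-1}$ were replaced by a non-orthogonal decoder, which is not needed here.
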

\begin{proof}
Let 
\begin{equation*}
\Ztransform_{\lambda, K}(\bs x) = \bs \Htransform_\eps^{-1} \dithQ_{\lambda, K}(\bs \Htransform_\eps \bs x)
\end{equation*}
and
\begin{equation*}
\dithQ_{\lambda, K}(\bs x) = \tfrac{\lambda}{K}\sum_{k=1}^K\sgn(\bs x+\bs \tau_k)
\end{equation*}
for $\bs \eps\in \{-1,1\}^d$ and $\bs \tau_1, \ldots, \bs \tau_K\in [-\lambda, \lambda]^d$ independent and uniformly distributed random vectors. 
Set $\tilde{\bs x}=\bs \Htransform_{\bs \eps} \bs x$ and define the event 
\begin{equation*}
\mathcal{A}:=\big\{\|\tilde{\bs x}\|_\infty\leq \lambda\big\}.
\end{equation*}
Define the set $\bs \tau :=\{\bs \tau_1,\ldots, \bs \tau_K\}$, the matrix $\bs V:= \bs \Htransform_\eps^{-1}$, and let $\bs v^{(i)}$ denote the $i$-th row of $\bs V$. 
On the event $\mathcal{A}$ we have
\begin{align*}
&\ts \E_{\bs \tau} \big[\big\|\Ztransform_{\lambda, K}(\bs x) - \bs x\big\|_2^2\big] =\E_{\bs \tau} \big[\big\|V\dithQ_{\lambda, K}(\tilde{\bs x}) - \bs V\tilde{\bs x}\big\|_2^2\big]\\ 
&\ts = \sum_{i=1}^d \E_{\bs \tau}\Big[\Big(\big\langle \bs v^{(i)}, \tfrac{1}{K}\sum_{k=1}^K\big(\lambda\sgn(\tilde{\bs x}+\bs \tau_k) -\tilde{\bs x}\big)\big\rangle \Big)^2\Big]\\
&=\tfrac{1}{K^2}\sum_{i=1}^d \sum_{k,k'=1}^K
\E_{\bs \tau}\big[\big(\big\langle \bs v^{(i)}, \lambda\sgn(\tilde{\bs x}+\bs \tau_k) -\tilde{\bs x}\big\rangle\big)\cdot (\big\langle \bs v^{(i)}, \lambda\sgn(\tilde{\bs x}+\bs \tau_{k'}) -\tilde{\bs x}\big\rangle\big)\big]\\
&\ts =\tfrac{1}{K^2}\sum_{i=1}^d \sum_{k=1}^K
\E_{\bs \tau}\big[\big(\big\langle \bs v^{(i)}, \lambda\sgn(\tilde{\bs x}+\bs \tau_k) -\tilde{\bs x}\big\rangle\big)^2\big]\\
&\ts =\tfrac{1}{K} \E_{\bs \tau}\big[ \eu{\bs V\dithQ_{\lambda}(\tilde{\bs x}) - \bs V\tilde{\bs x}}^2\big],
\end{align*}
where $\dithQ_{\lambda}=\dithQ_{\lambda,1}$ is the one-bit quantizer from  Def.~\ref{def:K_bit_quantizer}.
Observe that for the second to last equality we have used that the random vectors $\bs \tau_1, \ldots, \bs \tau_K$ are independent and that by Lemma~\ref{lem:K_bit_quantizer} we have $\E_{\bs \tau_k}[\lambda\sgn(\tilde{\bs x}+\bs \tau_k)]=\tilde{\bs x}$ 
on the event $\mathcal{A}$ for $k=1,\ldots, K$.
Lemma~\ref{lem:var} implies that on the event $\mathcal{A}$,
\begin{equation*}
\E_{\bs \tau}\big[ \eu{\bs V\dithQ_{\lambda}(\tilde{\bs x}) - \bs V\tilde{\bs x}}^2\big]\leq\|\bs V\|_F^2\lambda^2.
\end{equation*}
Therefore, 
\begin{align*}
\E \big[\big\|\Ztransform_{\lambda, K}(\bs x) - \bs x\big\|_2^2\big]
&=\E \big[\big\|\bs V\dithQ_{\lambda, K}(\tilde{\bs x}) - \bs V\tilde{\bs x}\big\|_2^2\big]\\ 
&= \E_{\bs \eps} \big[1_{\mathcal{A}}\E_{\bs \tau}\big[\big\|\bs V\dithQ_{\lambda, K}(\tilde{\bs x}) - \bs V\tilde{\bs x}\big\|_2^2\big]\big]\\
&\quad + \E_{\bs \eps} \big[1_{\mathcal{A}^C}\E_{\bs \tau}\big[\big\|\bs V\dithQ_{\lambda, K}(\tilde{\bs x}) - \bs V\tilde{\bs x}\big\|_2^2\big]\big]\\
&\leq \tfrac{\lambda^2}{K}\E_{\bs \eps} \big[\|\bs V\|_F^2\big]
 + \E_{\bs \eps} \big[1_{\mathcal{A}^C}\E_{\bs \tau}\big[\big\|\bs V\dithQ_{\lambda, K}(\tilde{\bs x}) - \bs V\tilde{\bs x}\big\|_2^2\big]\big].
\end{align*}
Using $\bs V=\bs D_{\bs \eps} \bs \Htransform^\top$, it is easy to check that  
$\|\bs V\|_F^2=d$. Further, 
\begin{align*}
\big\|\bs V\dithQ_{\lambda, K}(\tilde{\bs x}) - \bs V\tilde{\bs x}\big\|_2
= \big\|\Ztransform_{\lambda, K}(\bs x) - \bs x\big\|_2\leq \eu{\Ztransform_{\lambda, K}(\bs x)} + B\leq \lambda\sqrt{d} + B\leq 2\lambda\sqrt{d},
\end{align*}
where we have used that $\eu{\Ztransform_{\lambda, K}(\bs x)}\leq \lambda\sqrt{d}$ for all $\bs x\in \R^d$ (see Remark~\ref{rem:simple}).
It remains to bound $\Pb(\mathcal{A}^C)$.
Using Cor.~\ref{lem:Hadamard} and the assumption $\eu{\bs x}\leq B$, we obtain  
\begin{align*}
\Pb(\mathcal{A}^C)\leq \Pb(\|\bs \Htransform_{\bs \eps} \bs x\|_\infty >\alpha \sqrt{\tfrac{\log d}{d}}\eu{\bs x})
&\leq 2\exp(-\tfrac{1}{4}\alpha^2\log d).
\end{align*}
Putting everything together, we obtain  
\begin{equation*}
\E \big[\big\|\Ztransform_{\lambda, K}(\bs x) - \bs x\big\|_2^2\big]
\leq \tfrac{\lambda^2 d}{K}
 + 8\lambda^2 d \exp(-\tfrac{1}{4}\alpha^2\log d).
\end{equation*}
The result now follows using inequality \eqref{eq:std_calc}.
\end{proof}

\bibliographystyle{amsplain}
\bibliography{references}

\end{document}